\theoremstyle{definition}
\newtheorem{definition}{Definition}[section]
\newtheorem{theorem}{Theorem}[section]
\newtheorem{lemma}{Lemma}[section]
\newtheorem{corollary}{Corollary}[section]
\newcommand{\Nat}{\mathbb{N}}
\newcommand{\EE}{\mathbb{E}}
\DeclareMathOperator*{\argmin}{arg\,min}
\DeclareMathOperator*{\argmax}{arg\,max}
\newcommand{\defeq}{\vcentcolon=}
\newcommand{\eps}{\varepsilon}
\DeclarePairedDelimiterX{\inp}[2]{\langle}{\rangle}{#1, #2}
\definecolor{dkgreen}{rgb}{0,0.6,0}
\definecolor{gray}{rgb}{0.5,0.5,0.5}
\definecolor{mauve}{rgb}{0.58,0,0.82}
\setlist{nolistsep}
\begin{document}
\title{Multiple Source Domain Adaptation with Adversarial Learning}
\author{\name Han Zhao$^{\dagger}$\thanks{The first two authors contributed equally to this work.} \email han.zhao@cs.cmu.edu \\
			\name Shanghang Zhang$^{\ddagger*}$ \email shanghaz@andrew.cmu.edu \\
			\name Guanhang Wu$^\natural$ \email  guanhanw@andrew.cmu.edu\\
			\name Jo\~{a}o  P. Costeira$^\flat$ \email jpc@isr.ist.utl.pt \\
			\name Jos\'{e} M. F.  Moura$^\ddagger$ \email moura@andrew.cmu.edu \\
			\name Geoffrey J. Gordon$^\dagger$ \email ggordon@cs.cmu.edu \\
       \addr $^\dagger$Machine Learning Department, Carnegie Mellon University, Pittsburgh, PA, USA \\
       \addr $^\ddagger$Department of Electrical and Computer Engineering, Carnegie Mellon University, Pittsburgh, PA, USA \\
       \addr $^\natural$Robotics Institute, Carnegie Mellon University, Pittsburgh, PA, USA \\
       \addr $^\flat$Department of Electrical and Computer Engineering, Instituto Superior T\'{e}cnico, Lisbon, Portugal}
\maketitle

\begin{abstract}While domain adaptation has been actively researched in recent years, most theoretical results and algorithms focus on the single-source-single-target adaptation setting. Naive application of such algorithms on multiple source domain adaptation problem may lead to suboptimal solutions. As a step toward bridging the gap, we propose a new generalization bound for domain adaptation when there are multiple source domains with labeled instances and one target domain with unlabeled instances. Compared with existing bounds, the new bound does not require expert knowledge about the target distribution, nor the optimal combination rule for multisource domains. Interestingly, our theory also leads to an efficient learning strategy using adversarial neural networks: we show how to interpret it as learning feature representations that are invariant to the multiple domain shifts while still being discriminative for the learning task. To this end, we propose two models, both of which we call multisource domain adversarial networks (MDANs): the first model optimizes directly our bound, while the second model is a smoothed approximation of the first one, leading to a more data-efficient and task-adaptive model. The optimization tasks of both models are minimax saddle point problems that can be optimized by adversarial training. To demonstrate the effectiveness of MDANs, we conduct extensive experiments showing superior adaptation performance on three real-world datasets: sentiment analysis, digit classification, and vehicle counting. 
\end{abstract}

\section{Introduction}
\label{sec:Introduction}
The success of machine learning algorithms has been partially attributed to rich datasets with abundant annotations~\citep{krizhevsky2012imagenet,hinton2012deep,russakovsky2015imagenet}. Unfortunately, collecting and annotating such large-scale training data is prohibitively expensive and time-consuming. To solve these limitations, different labeled datasets can be combined to build a larger one, or synthetic training data can be generated with explicit yet inexpensive annotations~\citep{shrivastava2016learning}. However, due to the possible shift between training and test samples, learning algorithms based on these cheaper datasets still suffer from high generalization error. Domain adaptation (DA) focuses on such problems by establishing knowledge transfer from a labeled source domain to an unlabeled target domain, and by exploring domain-invariant structures and representations to bridge the gap~\citep{pan2010survey}. Both theoretical results~\citep{ben2010theory,mansour2009domain,mansour2012robust,xu2012robustness} and algorithms~\citep{becker2013non,hoffman2012discovering,ajakan2014domain} for DA have been proposed. Recently, DA algorithms based on deep neural networks produce breakthrough performance by learning more transferable features~\citep{glorot2011domain,donahue2014decaf,yosinski2014transferable,bousmalis2016domain,long2015learning}. Most theoretical results and algorithms with respect to DA focus on the single-source-single-target adaptation setting~\citep{ganin2016domain}. However, in many application scenarios, the labeled data available may come from multiple domains with different distributions. As a result, naive application of the single-source-single-target DA algorithms may lead to suboptimal solutions. Such problem calls for an efficient technique for multiple source domain adaptation. 

In this paper, we theoretically analyze the multiple source domain adaptation problem and propose an adversarial learning strategy based on our theoretical results. Specifically, we prove a new generalization bound for domain adaptation when there are multiple source domains with labeled instances and one target domain with unlabeled instances. Our theoretical results build on the seminal theoretical model for domain adaptation introduced by \citet{ben2010theory}, where a divergence measure, known as the $\mathcal{H}$-divergence, was proposed to measure the distance between two distributions based on a given hypothesis space $\mathcal{H}$. Our new result generalizes the bound~\citep[Thm. 2]{ben2010theory} to the case when there are multiple source domains. The new bound has an interesting interpretation and reduces to~\citep[Thm. 2]{ben2010theory} when there is only one source domain. Technically, we derive our bound by first proposing a generalized  $\mathcal{H}$-divergence measure between two sets of distributions from multi-domains. We then prove a PAC bound~\citep{valiant1984theory} for the target risk by bounding it from empirical source risks, using tools from concentration inequalities and the VC theory~\citep{vapnik1998statistical}. Compared with existing bounds, the new bound does not require expert knowledge about the target domain distribution~\citep{mansour2009mixture}, nor the optimal combination rule for multiple source domains~\citep{ben2010theory}. Our results also imply that it is not always beneficial to naively incorporate more source domains into training, which we verify to be true in our experiments.

Interestingly, our bound also leads to an efficient implementation using adversarial neural networks. This implementation learns both domain invariant and task discriminative feature representations under multiple domains. Specifically, we propose two models (both named MDANs) by using neural networks as rich function approximators to instantiate the generalization bound we derive (Fig. \ref{fig:arch}). After proper transformations, both models can be viewed as computationally efficient approximations of our generalization bound, so that the goal is to optimize the parameters of the networks in order to minimize the bound. The first model optimizes directly our generalization bound, while the second is a smoothed approximation of the first, leading to a more data-efficient and task-adaptive model. The optimization problem for each model is a minimax saddle point problem, which can be interpreted as a zero-sum game with two participants competing against each other to learn invariant features. Both models combine feature extraction, domain classification, and task learning in one training process. MDANs is generalization of the popular domain adversarial neural network (DANN)~\citep{ganin2016domain} and reduce to it when there is only one source domain. We propose to use stochastic optimization with simultaneous updates to optimize the parameters in each iteration. To demonstrate the effectiveness of MDANs as well as the relevance of our theoretical results, we conduct extensive experiments on real-world datasets, including both natural language and vision tasks. We achieve superior adaptation performances on all the tasks, validating the effectiveness of our models. 

\section{Preliminary}
\label{sec:preliminary}
We first introduce the notation used in this paper and review a theoretical model for domain adaptation when there is only one source and one target domain~\citep{kifer2004detecting,ben2007analysis,blitzer2008learning,ben2010theory}. The key idea is the $\mathcal{H}$-divergence to measure the discrepancy between two distributions. Other theoretical models for DA exist~\citep{cortes2008sample,mansour2009domain,mansour2009multiple,cortes2014domain}; we choose to work with the above model because this distance measure has a particularly natural interpretation and can be well approximated using samples from both domains.

\paragraph{Notations} We use \emph{domain} to represent a distribution $\mathcal{D}$ on input space $\mathcal{X}$ and a labeling function $f:\mathcal{X}\to [0, 1]$. In the setting of one source one target domain adaptation, we use $\langle \mathcal{D}_S, f_S\rangle$ and $\langle \mathcal{D}_T, f_T\rangle$ to denote the source and target domain, respectively. A \emph{hypothesis} is a binary classification function $h:\mathcal{X}\to\{0, 1\}$. The \emph{error} of a hypothesis $h$ w.r.t. a labeling function $f$ under distribution $\mathcal{D}_S$ is defined as: $\eps_S(h, f)\defeq \mathbb{E}_{\mathbf{x}\sim\mathcal{D}_S}[|h(\mathbf{x}) - f(\mathbf{x})|]$. When $f$ is also a hypothesis, then this definition reduces to the probability that $h$ disagrees with $h$ under $\mathcal{D}_S$: $\mathbb{E}_{\mathbf{x}\sim\mathcal{D}_S}[|h(\mathbf{x}) - f(\mathbf{x})|] = \mathbb{E}_{\mathbf{x}\sim\mathcal{D}_S}[\mathbb{I}(f(\mathbf{x})\neq h(\mathbf{x}))] = \Pr_{\mathbf{x}\sim\mathcal{D}_S}(f(\mathbf{x})\neq h(\mathbf{x}))$.

We define the \emph{risk} of hypothesis $h$ as the error of $h$ w.r.t. a true labeling function under domain $\mathcal{D}_S$, i.e., $\eps_S(h)\defeq \eps_S(h, f_S)$. As common notation in computational learning theory, we use $\widehat{\eps}_S(h)$ to denote the empirical risk of $h$ on the source domain. Similarly, we use $\eps_T(h)$ and $\widehat{\eps}_T(h)$ to mean the true risk and the empirical risk on the target domain. $\mathcal{H}$-divergence is defined as follows:
\begin{definition}
Let $\mathcal{H}$ be a hypothesis class for instance space $\mathcal{X}$, and $\mathcal{A}_\mathcal{H}$ be the collection of subsets of $\mathcal{X}$ that are the support of some hypothesis in $\mathcal{H}$, i.e., $\mathcal{A}_\mathcal{H}\defeq \{h^{-1}(\{1\})\mid h\in\mathcal{H}\}$. The distance between two distributions $\mathcal{D}$ and $\mathcal{D}'$ based on $\mathcal{H}$ is: 
$$d_\mathcal{H}(\mathcal{D}, \mathcal{D}')\defeq 2\sup_{A\in\mathcal{A}_\mathcal{H}}|\Pr_\mathcal{D}(A) - \Pr_{\mathcal{D}'}(A)|$$
\end{definition}
When the hypothesis class $\mathcal{H}$ contains all the possible measurable functions over $\mathcal{X}$, $d_\mathcal{H}(\mathcal{D}, \mathcal{D}')$ reduces to the familiar total variation. Given a hypothesis class $\mathcal{H}$, we define its symmetric difference w.r.t. itself as: $\mathcal{H}\Delta\mathcal{H} = \{h(\mathbf{x})\oplus h'(\mathbf{x})\mid h, h'\in\mathcal{H}\}$, where $\oplus$ is the xor operation. Let $h^*$ be the optimal hypothesis that achieves the minimum combined risk on both the source and the target domains: 
$$h^*\defeq \argmin_{h\in\mathcal{H}}\eps_S(h) + \eps_T(h)$$
and use $\lambda$ to denote the combined risk of the optimal hypothesis $h^*$:
$$\lambda \defeq \eps_S(h^*) + \eps_T(h^*)$$
\citet{ben2007analysis} and \citet{blitzer2008learning} proved the following generalization bound on the target risk in terms of the source risk and the discrepancy between the source domain and the target domain:
\begin{theorem}[\citep{blitzer2008learning}]
\label{thm:shai}
Let $\mathcal{H}$ be a hypothesis space of $VC$-dimension $d$ and $\mathcal{U}_S$, $\mathcal{U}_T$ be unlabeled samples of size $m$ each, drawn from $\mathcal{D}_S$ and $\mathcal{D}_T$, respectively. Let $\widehat{d}_{\mathcal{H}\Delta\mathcal{H}}$ be the empirical distance on $\mathcal{U}_S$ and $\mathcal{U}_T$; then with probability at least $1 - \delta$ over the choice of samples, for each $h\in\mathcal{H}$, 
\begin{equation}
\eps_T(h)\leq\eps_S(h) + \frac{1}{2}\widehat{d}_{\mathcal{H}\Delta\mathcal{H}}(\mathcal{U}_S, \mathcal{U}_T) + 4\sqrt{\frac{2d\log(2m) + \log(4/\delta)}{m}} + \lambda
\label{equ:singlebound}
\end{equation}
\end{theorem}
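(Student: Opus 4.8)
The plan is to separate the argument into a deterministic, population-level inequality and a probabilistic sample-complexity estimate, following the strategy of \citet{ben2007analysis} and \citet{ben2010theory}.

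First I would record the triangle inequality for the disagreement error: for any distribution $\mathcal{D}$ on $\mathcal{X}$ and any functions $f_1, f_2, f_3 : \mathcal{X} \to [0,1]$, one has $\eps_\mathcal{D}(f_1, f_3) \leq \eps_\mathcal{D}(f_1, f_2) + \eps_\mathcal{D}(f_2, f_3)$, which is immediate from the pointwise bound $|f_1(\mathbf{x}) - f_3(\mathbf{x})| \leq |f_1(\mathbf{x}) - f_2(\mathbf{x})| + |f_2(\mathbf{x}) - f_3(\mathbf{x})|$ after taking expectation under $\mathcal{D}$. Applying this twice --- once on the target domain with labeling function $f_T$ and the joint-optimal hypothesis $h^*$ as the intermediate term, and once on the source domain with $f_S$ --- gives
\begin{equation*}
\eps_T(h) \leq \eps_T(h^*) + \eps_T(h, h^*) \leq \eps_T(h^*) + \eps_S(h, h^*) + \big|\eps_T(h, h^*) - \eps_S(h, h^*)\big|,
\end{equation*}
together with $\eps_S(h, h^*) \leq \eps_S(h) + \eps_S(h^*)$. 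The key observation is that the set on which $h$ and $h^*$ disagree equals $(h \oplus h^*)^{-1}(\{1\})$, the support of a hypothesis in $\mathcal{H}\Delta\mathcal{H}$, hence a member of $\mathcal{A}_{\mathcal{H}\Delta\mathcal{H}}$; the definition of $d_{\mathcal{H}\Delta\mathcal{H}}$ then yields $|\eps_T(h, h^*) - \eps_S(h, h^*)| \leq \tfrac{1}{2} d_{\mathcal{H}\Delta\mathcal{H}}(\mathcal{D}_S, \mathcal{D}_T)$. Chaining these and substituting $\lambda = \eps_S(h^*) + \eps_T(h^*)$ gives the population bound $\eps_T(h) \leq \eps_S(h) + \tfrac{1}{2} d_{\mathcal{H}\Delta\mathcal{H}}(\mathcal{D}_S, \mathcal{D}_T) + \lambda$, valid for every $h \in \mathcal{H}$ simultaneously and with no randomness involved.

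Second, I would replace the population divergence $d_{\mathcal{H}\Delta\mathcal{H}}(\mathcal{D}_S, \mathcal{D}_T)$ by its empirical version at the cost of a uniform-convergence term. Because $\mathcal{H}$ has $VC$-dimension $d$, Sauer's lemma bounds the growth function of $\mathcal{H}\Delta\mathcal{H}$ by the square of that of $\mathcal{H}$, so $\mathcal{H}\Delta\mathcal{H}$ and hence the set system $\mathcal{A}_{\mathcal{H}\Delta\mathcal{H}}$ has $VC$-dimension at most $2d$. A standard symmetrization argument combined with a Hoeffding-type concentration inequality --- i.e., the classical $VC$ uniform-convergence bound applied to the two-sample empirical process that, for $A \in \mathcal{A}_{\mathcal{H}\Delta\mathcal{H}}$, compares the true and empirical masses of $A$ under $\mathcal{D}_S$ and $\mathcal{D}_T$ --- shows that, with probability at least $1-\delta$ over the draws of $\mathcal{U}_S$ and $\mathcal{U}_T$,
\begin{equation*}
\tfrac{1}{2} d_{\mathcal{H}\Delta\mathcal{H}}(\mathcal{D}_S, \mathcal{D}_T) \;\leq\; \tfrac{1}{2}\widehat{d}_{\mathcal{H}\Delta\mathcal{H}}(\mathcal{U}_S, \mathcal{U}_T) + 4\sqrt{\frac{2d\log(2m) + \log(4/\delta)}{m}}.
\end{equation*}
Substituting this into the deterministic population bound from the first step produces exactly the claimed inequality.

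The two ingredients of the first step --- the triangle inequalities and the identification of the disagreement region as an element of $\mathcal{A}_{\mathcal{H}\Delta\mathcal{H}}$ --- are routine. The main obstacle is the second step: bounding the $VC$-dimension of $\mathcal{H}\Delta\mathcal{H}$, correctly handling the supremum of the two-sample empirical process over an infinite family of sets, and tracking the constants and logarithmic factors so that the deviation term comes out exactly as $4\sqrt{(2d\log(2m) + \log(4/\delta))/m}$. Alternatively, one may invoke the corresponding sample-complexity lemma of \citet{ben2010theory} off the shelf and skip the concentration calculation entirely.
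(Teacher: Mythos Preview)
Your proposal is correct and follows the standard argument. Note, however, that the paper does not actually prove Theorem~\ref{thm:shai}: it is quoted as a known result from \citet{blitzer2008learning} in the preliminaries, with no proof given. That said, the paper \emph{does} prove the multi-source generalization (Theorem~\ref{thm:multi} and Theorem~\ref{thm:main}), and its argument there is precisely the $k$-source analogue of what you wrote: the same triangle-inequality chain through $h^*$, the same invocation of the lemma $|\eps_S(h,h') - \eps_T(h,h')| \leq \tfrac{1}{2} d_{\mathcal{H}\Delta\mathcal{H}}(\mathcal{D}_S,\mathcal{D}_T)$ (stated in the appendix as Lemma~\ref{lemma:shai}), and then a VC/Sauer/Hoeffding concentration step to pass from the population divergence to the empirical one. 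So your decomposition into a deterministic population bound plus a uniform-convergence correction is exactly the template the paper adopts, specialized to $k=1$.
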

The generalization bound depends on $\lambda$, the optimal combined risk that can be achieved by hypothesis in $\mathcal{H}$. The intuition is that if $\lambda$ is large, then we cannot hope for a successful domain adaptation. One notable feature of this bound is that the empirical discrepancy distance between two samples $\mathcal{U}_S$ and $\mathcal{U}_T$ can usually be approximated by a discriminator to distinguish instances from these two domains. 

\section{A New Generalization Bound for Multiple Source Domain Adaptation}
\label{sec:bound}
In this section we first generalize the definition of the discrepancy function $d_{\mathcal{H}}(\cdot, \cdot)$ that is only appropriate when we have two domains. We will then use the generalized discrepancy function to derive a generalization bound for multisource domain adaptation. We conclude this section with a discussion and comparison of our bound and existing generalization bounds for multisource domain adaptation~\citep{mansour2009multiple,ben2010theory}. We refer readers to appendix for proof details and we mainly focus on discussing the interpretations and implications of the theorems.

Let $\{\mathcal{D}_{S_i}\}_{i=1}^k$ and $\mathcal{D}_T$ be $k$ source domains and the target domain, respectively. We define the discrepancy function $d_{\mathcal{H}}(\mathcal{D}_T; \{\mathcal{D}_{S_i}\}_{i=1}^k)$ induced by $\mathcal{H}$ to measure the distance between $\mathcal{D}_T$ and a set of domains $\{\mathcal{D}_{S_i}\}_{i=1}^k$ as follows:
\begin{definition}
$$d_{\mathcal{H}}(\mathcal{D}_T; \{\mathcal{D}_{S_i}\}_{i=1}^k) \defeq \max_{i\in[k]}d_\mathcal{H}(\mathcal{D}_T; \mathcal{D}_{S_i}) = 2\max_{i\in[k]}\sup_{A\in\mathcal{A}_\mathcal{H}}|\Pr_{D_T}(A) - \Pr_{D_{S_i}}(A)|$$
\end{definition}
Again, let $h^*$ be the optimal hypothesis that achieves the minimum combined risk: 
$$h^* \defeq \argmin_{h\in\mathcal{H}}\left(\eps_T(h) + \max_{i\in[k]}\eps_{S_i}(h)\right)$$ and define 
$$\lambda \defeq \eps_T(h^*) + \max_{i\in[k]}\eps_{S_i}(h^*)$$
i.e., the minimum risk that is achieved by $h^*$. The following lemma holds for $\forall h\in\mathcal{H}$:
\begin{restatable}{theorem}{population}
\label{thm:multi}
$\eps_T(h) \leq \max_{i\in[k]}\eps_{S_i}(h) + \lambda + \frac{1}{2}d_{\mathcal{H}\Delta\mathcal{H}}(\mathcal{D}_T; \{\mathcal{D}_{S_i}\}_{i=1}^k)$.
\end{restatable}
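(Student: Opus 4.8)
The plan is to mimic the two-domain argument of \citet{ben2007analysis,blitzer2008learning} but route every inequality through the ``worst source'' index rather than a single fixed source. Fix an arbitrary $h\in\mathcal{H}$, let $h^*$ be the optimal hypothesis defined above, and let $j\in[k]$ be an index achieving $\max_{i\in[k]}\eps_{S_i}(h)$ — actually it is cleaner to pick $j$ achieving $\max_{i\in[k]}\eps_{S_i}(h^*,h)$, the disagreement term; I would sort out which maximizer to fix once the chain of inequalities is written down. The core identity is the triangle inequality for the error function $\eps_D(\cdot,\cdot)$: for any distribution $\mathcal{D}$ and any $h,h',h''$, $\eps_\mathcal{D}(h,h'')\le\eps_\mathcal{D}(h,h')+\eps_\mathcal{D}(h',h'')$, which follows from $|h(\mathbf{x})-h''(\mathbf{x})|\le|h(\mathbf{x})-h'(\mathbf{x})|+|h'(\mathbf{x})-h''(\mathbf{x})|$ pointwise and linearity of expectation.

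The steps, in order: (1) Start from $\eps_T(h)=\eps_T(h,f_T)\le \eps_T(h,h^*)+\eps_T(h^*,f_T)=\eps_T(h,h^*)+\eps_T(h^*)$ by the triangle inequality. (2) Bound $\eps_T(h,h^*)$ by a source disagreement plus the $\mathcal{H}\Delta\mathcal{H}$-discrepancy: since $h\oplus h^*\in\mathcal{H}\Delta\mathcal{H}$, the set on which $h$ and $h^*$ disagree lies in $\mathcal{A}_{\mathcal{H}\Delta\mathcal{H}}$, hence for \emph{every} $i$, $|\eps_T(h,h^*)-\eps_{S_i}(h,h^*)|\le \tfrac12 d_{\mathcal{H}\Delta\mathcal{H}}(\mathcal{D}_T,\mathcal{D}_{S_i})$; taking the index that makes this bound weakest and invoking the definition of $d_{\mathcal{H}\Delta\mathcal{H}}(\mathcal{D}_T;\{\mathcal{D}_{S_i}\}_{i=1}^k)$ as the max over $i$ gives $\eps_T(h,h^*)\le \eps_{S_i}(h,h^*)+\tfrac12 d_{\mathcal{H}\Delta\mathcal{H}}(\mathcal{D}_T;\{\mathcal{D}_{S_i}\}_{i=1}^k)$ for that particular $i$, and then $\eps_{S_i}(h,h^*)\le\max_{i\in[k]}\eps_{S_i}(h,h^*)$. (3) Apply the triangle inequality again on each source: $\eps_{S_i}(h,h^*)\le \eps_{S_i}(h)+\eps_{S_i}(h^*)$, so $\max_i\eps_{S_i}(h,h^*)\le \max_i\eps_{S_i}(h)+\max_i\eps_{S_i}(h^*)$ (using $\max_i(a_i+b_i)\le\max_i a_i+\max_i b_i$). (4) Collect terms: $\eps_T(h)\le \max_i\eps_{S_i}(h)+\big(\eps_T(h^*)+\max_i\eps_{S_i}(h^*)\big)+\tfrac12 d_{\mathcal{H}\Delta\mathcal{H}}(\mathcal{D}_T;\{\mathcal{D}_{S_i}\}_{i=1}^k)$, and the middle parenthesized quantity is exactly $\lambda$ by definition.

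The only delicate point — the ``main obstacle,'' though it is minor — is the bookkeeping of which index is fixed at which stage: step (2) forces a particular $i$ (the worst one for the discrepancy), while step (3) wants to pass to $\max_i$, and one must check these are compatible, i.e.\ that after choosing the discrepancy-worst index we may still upper bound its disagreement term by the overall maximum. This works precisely because $\eps_{S_i}(h,h^*)\le\max_{i'\in[k]}\eps_{S_{i'}}(h,h^*)$ trivially for any single $i$, so no conflict arises; one just has to resist the temptation to fix a single index globally. Everything else is the standard $\mathcal{H}$-divergence manipulation, and setting $k=1$ collapses the max's and recovers the population version of Theorem~\ref{thm:shai}.
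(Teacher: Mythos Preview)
Your proposal is correct and follows essentially the same route as the paper's proof: triangle inequality on the target, the Ben-David--Blitzer bound $|\eps_T(h,h^*)-\eps_{S_i}(h,h^*)|\le\tfrac12 d_{\mathcal{H}\Delta\mathcal{H}}(\mathcal{D}_T,\mathcal{D}_{S_i})$ passed to the max over $i$, triangle inequality on the source side, and then collecting into $\lambda$. The paper in fact makes exactly the choice you call ``cleaner,'' fixing $i_h\defeq\argmax_{i\in[k]}\eps_{S_i}(h,h^*)$ at the outset, which dissolves the index-bookkeeping worry you flag at the end; your resolution of that worry is also correct, since the multisource discrepancy upper-bounds every single-source discrepancy and any fixed $i$'s disagreement is trivially bounded by the max.
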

\textbf{Remark}. Let us take a closer look at the generalization bound: to make it small, the discrepancy measure between the target domain and the multiple source domains need to be small. Otherwise we cannot hope for successful adaptation by only using labeled instances from the source domains. In this case there will be no hypothesis that performs well on both the source domains and the target domain. It is worth pointing out here that the second term and the third term together introduce a tradeoff (regularization) on the complexity of our hypothesis class $\mathcal{H}$. Namely, if $\mathcal{H}$ is too restricted, then the second term $\lambda$ can be large while the discrepancy term can be small. On the other hand, if $\mathcal{H}$ is very rich, then we expect the optimal error, $\lambda$, to be small, while the discrepancy measure $d_{\mathcal{H}\Delta\mathcal{H}}(\mathcal{D}_T; \{\mathcal{D}_{S_i}\}_{i=1}^k)$ to be large. The first term is a standard source risk term that usually appears in generalization bounds under the PAC-learning framework~\citep{valiant1984theory,vapnik1998statistical}. Later we shall upper bound this term by its corresponding empirical risk.

The discrepancy distance $d_{\mathcal{H}\Delta\mathcal{H}}(\mathcal{D}_T; \{\mathcal{D}_{S_i}\}_{i=1}^k)$ is usually unknown. However, we can bound $d_{\mathcal{H}\Delta\mathcal{H}}(\mathcal{D}_T; \{\mathcal{D}_{S_i}\}_{i=1}^k)$ from its empirical estimation using $i.i.d.$ samples from $\mathcal{D}_T$ and $\{\mathcal{D}_{S_i}\}_{i=1}^k$:
\begin{restatable}{theorem}{discrepancy}
\label{thm:discrepancy}
Let $\mathcal{D}_T$ and $\{\mathcal{D}_{S_i}\}_{i=1}^k$ be the target distribution and $k$ source distributions over $\mathcal{X}$. Let $\mathcal{H}$ be a hypothesis class where $VC\textrm{dim}(\mathcal{H}) = d$. If $\widehat{\mathcal{D}}_T$ and $\{\widehat{\mathcal{D}}_{S_i}\}_{i=1}^k$ are the empirical distributions of $\mathcal{D}_T$ and $\{\mathcal{D}_{S_i}\}_{i=1}^k$ generated with $m$ $i.i.d.$ samples from each domain, then for $\epsilon > 0$, we have:
$$\Pr\left(\left| d_{\mathcal{H}}(\mathcal{D}_T; \{\mathcal{D}_{S_i}\}_{i=1}^k) - d_{\mathcal{H}}(\widehat{\mathcal{D}}_T; \{\widehat{\mathcal{D}}_{S_i}\}_{i=1}^k)\right| \geq \epsilon\right)\leq 4k\left(\frac{em}{d}\right)^d \exp\left(-m\epsilon^2 / 8\right)$$
\end{restatable}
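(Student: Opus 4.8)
The plan is to strip off the outer maximum over the $k$ source domains and then reduce everything to the classical Vapnik--Chervonenkis uniform-convergence inequality applied once per source domain. Recall $d_{\mathcal{H}}(\mathcal{D}_T;\{\mathcal{D}_{S_i}\}_{i=1}^k) = \max_{i\in[k]} d_{\mathcal{H}}(\mathcal{D}_T,\mathcal{D}_{S_i})$, and likewise for the empirical distributions. Since $\lvert\max_i a_i - \max_i b_i\rvert \le \max_i \lvert a_i - b_i\rvert$ for any reals, followed by a union bound over $i\in[k]$, it suffices to prove that for each fixed $i$,
\[
\Pr\!\left(\bigl\lvert d_{\mathcal{H}}(\mathcal{D}_T,\mathcal{D}_{S_i}) - d_{\mathcal{H}}(\widehat{\mathcal{D}}_T,\widehat{\mathcal{D}}_{S_i})\bigr\rvert \ge \epsilon\right) \;\le\; 4\left(\frac{em}{d}\right)^{d}\exp\!\left(-\frac{m\epsilon^{2}}{8}\right),
\]
since multiplying the right-hand side by $k$ then yields the theorem.

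For the single-pair estimate I would first rewrite the discrepancy as a supremum of a centered empirical process over the set system $\mathcal{A}_\mathcal{H}$, which has VC dimension $d$ because it shatters exactly the finite point sets that $\mathcal{H}$ does. From $\tfrac12 d_{\mathcal{H}}(\mathcal{D}_T,\mathcal{D}_{S_i}) = \sup_{A\in\mathcal{A}_\mathcal{H}}\bigl\lvert\Pr_{\mathcal{D}_T}(A) - \Pr_{\mathcal{D}_{S_i}}(A)\bigr\rvert$ and the analogous identity for the empirical distributions, the triangle inequality and the reverse triangle inequality $\bigl\lvert\lvert u\rvert-\lvert v\rvert\bigr\rvert\le\lvert u-v\rvert$ applied to the two defining suprema give
\[
\bigl\lvert d_{\mathcal{H}}(\mathcal{D}_T,\mathcal{D}_{S_i}) - d_{\mathcal{H}}(\widehat{\mathcal{D}}_T,\widehat{\mathcal{D}}_{S_i})\bigr\rvert \;\le\; 2\sup_{A\in\mathcal{A}_\mathcal{H}}\Bigl\lvert\bigl(\Pr_{\mathcal{D}_T}(A)-\Pr_{\widehat{\mathcal{D}}_T}(A)\bigr) - \bigl(\Pr_{\mathcal{D}_{S_i}}(A)-\Pr_{\widehat{\mathcal{D}}_{S_i}}(A)\bigr)\Bigr\rvert.
\]
For a fixed $A$, the quantity inside the supremum is an average of $2m$ independent, mean-zero terms bounded in $[-1,1]$ (the $m$ target draws contribute $\Pr_{\mathcal{D}_T}(A)-\mathbb{I}_A(\cdot)$ and the $m$ source draws contribute $\mathbb{I}_A(\cdot)-\Pr_{\mathcal{D}_{S_i}}(A)$, each scaled by $1/m$), so Hoeffding's inequality gives a subgaussian tail. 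To pass to the supremum over the possibly infinite family $\mathcal{A}_\mathcal{H}$, I would invoke the standard symmetrization (ghost-sample) argument together with Sauer's lemma, which bounds the number of distinct restrictions of $\mathcal{A}_\mathcal{H}$ to any finite point set by $(em/d)^{d}$; this is exactly the classical two-sample VC uniform-convergence bound, and it yields $\Pr\bigl(\sup_{A}\lvert\cdots\rvert \ge \epsilon/2\bigr)\le 4(em/d)^{d}\exp(-m\epsilon^{2}/8)$, which together with the factor $k$ from the first step finishes the proof.

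I expect the main obstacle to be the constant bookkeeping in that last step. The symmetrization argument is usually stated for the one-sample deviation $\sup_{A}\lvert\Pr_{\mathcal{D}}(A)-\Pr_{\widehat{\mathcal{D}}}(A)\rvert$, so one must check that it carries over verbatim to the two-sample difference appearing above, and that the relevant variance / bounded-difference parameters (total squared range $2/m$ across the $2m$ summands) reproduce precisely the exponent $m\epsilon^{2}/8$ and the prefactor $4(em/d)^{d}$. A lower-risk alternative, if matching those constants proves delicate, is to bound $\sup_{A}\lvert\Pr_{\mathcal{D}_T}(A)-\Pr_{\widehat{\mathcal{D}}_T}(A)\rvert$ and $\sup_{A}\lvert\Pr_{\mathcal{D}_{S_i}}(A)-\Pr_{\widehat{\mathcal{D}}_{S_i}}(A)\rvert$ separately by one-sample VC inequalities, each at level $\epsilon/4$, which affects only harmless constant and $k$-dependent factors in the union bound while reaching the same conclusion.
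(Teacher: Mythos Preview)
Your proposal is correct and follows essentially the same route as the paper: strip the outer $\max_{i\in[k]}$ via $|\max_i a_i-\max_i b_i|\le\max_i|a_i-b_i|$ and a union bound, then control each single-pair discrepancy deviation through VC-type uniform convergence. Your ``lower-risk alternative'' of splitting into the two one-sample deviations $\sup_A|\Pr_{\mathcal{D}_T}(A)-\Pr_{\widehat{\mathcal{D}}_T}(A)|$ and $\sup_A|\Pr_{\mathcal{D}_{S_i}}(A)-\Pr_{\widehat{\mathcal{D}}_{S_i}}(A)|$ at level $\epsilon/4$ is precisely what the paper does (its chain of inequalities applies the triangle inequality to separate the target and source empirical fluctuations, then an averaging argument, then a growth-function union bound plus Hoeffding and Sauer's lemma), so your anticipated constant-bookkeeping concern with the combined two-sample process is moot---just take the fallback and you match the paper line for line.
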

The main idea of the proof is to use VC theory~\citep{vapnik1998statistical} to reduce the infinite hypothesis space to a finite space when acting on finite samples. The theorem then follows from standard union bound and concentration inequalities. Equivalently, the following corollary holds:
\begin{corollary}
\label{coro:discrepancy}
Let $\mathcal{D}_T$ and $\{\mathcal{D}_{S_i}\}_{i=1}^k$ be the target distribution and $k$ source distributions over $\mathcal{X}$. Let $\mathcal{H}$ be a hypothesis class where $VC\textrm{dim}(\mathcal{H}) = d$. If $\widehat{\mathcal{D}}_T$ and $\{\widehat{\mathcal{D}}_{S_i}\}_{i=1}^k$ are the empirical distributions of $\mathcal{D}_T$ and $\{\mathcal{D}_{S_i}\}_{i=1}^k$ generated with $m$ $i.i.d.$ samples from each domain, then, for $0 < \delta < 1$, with probability at least $1-\delta$ (over the choice of samples), we have:
$$\left| d_{\mathcal{H}}(\mathcal{D}_T; \{\mathcal{D}_{S_i}\}_{i=1}^k) - d_{\mathcal{H}}(\widehat{\mathcal{D}}_T; \{\widehat{\mathcal{D}}_{S_i}\}_{i=1}^k)\right| \leq 2\sqrt{\frac{2}{m}\left(\log\frac{4k}{\delta} + d\log\frac{em}{d}\right)}$$
\end{corollary}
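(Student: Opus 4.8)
The plan is to invert the exponential tail bound of Theorem~\ref{thm:discrepancy} into the standard confidence–interval form; no new probabilistic ingredient is needed, so we may treat Theorem~\ref{thm:discrepancy} as a black box. Write $\Delta \defeq \left| d_{\mathcal{H}}(\mathcal{D}_T; \{\mathcal{D}_{S_i}\}_{i=1}^k) - d_{\mathcal{H}}(\widehat{\mathcal{D}}_T; \{\widehat{\mathcal{D}}_{S_i}\}_{i=1}^k)\right|$ for the quantity to be controlled. Theorem~\ref{thm:discrepancy} states that for every $\epsilon > 0$,
$$\Pr(\Delta \geq \epsilon) \leq 4k\left(\frac{em}{d}\right)^d \exp\!\left(-\frac{m\epsilon^2}{8}\right) \eqdef g(\epsilon).$$
The function $g$ is continuous and strictly decreasing on $(0,\infty)$ with $g(\epsilon)\to 0$ as $\epsilon\to\infty$, so for any $\delta\in(0,1)$ there is a (unique) $\epsilon_\delta$ with $g(\epsilon_\delta)=\delta$; then $\Pr(\Delta < \epsilon_\delta) \geq 1-\delta$.

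The second step is to solve $g(\epsilon_\delta) = \delta$ in closed form. Taking logarithms,
$$\log(4k) + d\log\frac{em}{d} - \frac{m\epsilon_\delta^2}{8} = \log\delta,$$
hence $m\epsilon_\delta^2/8 = \log(4k/\delta) + d\log(em/d)$, and therefore
$$\epsilon_\delta = \sqrt{\frac{8}{m}\left(\log\frac{4k}{\delta} + d\log\frac{em}{d}\right)} = 2\sqrt{\frac{2}{m}\left(\log\frac{4k}{\delta} + d\log\frac{em}{d}\right)},$$
using $\sqrt{8} = 2\sqrt{2}$. Substituting this $\epsilon_\delta$ into the event $\{\Delta < \epsilon_\delta\}$ and weakening the strict inequality to $\leq$ yields exactly the statement of Corollary~\ref{coro:discrepancy}.

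There is essentially no obstacle here: the proof is a one-line inversion of Theorem~\ref{thm:discrepancy}, and the only things to keep track of are the algebra in solving the quadratic for $\epsilon_\delta$ and the cosmetic simplification $\sqrt{8}=2\sqrt{2}$. If one prefers to avoid invoking the intermediate value theorem for the existence of $\epsilon_\delta$, one can instead \emph{define} $\epsilon_\delta$ directly by the displayed closed-form expression and simply verify that $g(\epsilon_\delta)\le\delta$ (indeed $=\delta$) by plugging in, which makes the argument completely elementary.
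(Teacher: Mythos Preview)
Your proposal is correct and is exactly the standard inversion the paper has in mind: the paper does not spell out a separate proof of Corollary~\ref{coro:discrepancy} but simply introduces it with ``Equivalently, the following corollary holds,'' which is precisely the one-line rearrangement of Theorem~\ref{thm:discrepancy} you carried out.
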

Note that multiple source domains do not increase the sample complexity too drastically: it is only the square root of a log term in Corollary.~\ref{coro:discrepancy} where $k$ appears. 

Similarly, we do not usually have access to the true error $\max_{i\in[k]}\eps_{S_i}(h)$ on the source domains, but we can often have an estimate ($\max_{i\in[k]}\widehat{\eps}_{S_i}(h)$) from training samples. We now provide a probabilistic guarantee to bound the difference between $\max_{i\in[k]}\eps_{S_i}(h)$ and $\max_{i\in[k]}\widehat{\eps}_{S_i}(h)$ uniformly for all $h\in\mathcal{H}$:
\begin{restatable}{theorem}{error}
\label{thm:error}
Let $\{\mathcal{D}_{S_i}\}_{i=1}^k$ be $k$ source distributions over $\mathcal{X}$. Let $\mathcal{H}$ be a hypothesis class where $VC\textrm{dim}(\mathcal{H}) = d$. If $\{\widehat{\mathcal{D}}_{S_i}\}_{i=1}^k$ are the empirical distributions of $\{\mathcal{D}_{S_i}\}_{i=1}^k$ generated with $m$ $i.i.d.$ samples from each domain, then, for $\epsilon > 0$, we have:
$$\Pr\left(\sup_{h\in\mathcal{H}}\left| \max_{i\in[k]}\eps_{S_i}(h) - \max_{i\in[k]}\widehat{\eps}_{S_i}(h) \right| \geq \epsilon\right) \leq 2k\left(\frac{me}{d}\right)^d \exp(-2m\epsilon^2)$$
\end{restatable}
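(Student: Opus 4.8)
The plan is to peel off the two layers of supremum — the pointwise $\max$ over the $k$ sources and the $\sup$ over the infinite class $\mathcal{H}$ — and then invoke a classical VC-type uniform convergence bound for a single fixed source.

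First I would use the elementary inequality $\left|\max_{i\in[k]}a_i - \max_{i\in[k]}b_i\right| \le \max_{i\in[k]}|a_i - b_i|$ with $a_i = \eps_{S_i}(h)$ and $b_i = \widehat{\eps}_{S_i}(h)$. This shows that the event $\left\{\sup_{h\in\mathcal{H}}\left|\max_{i\in[k]}\eps_{S_i}(h) - \max_{i\in[k]}\widehat{\eps}_{S_i}(h)\right| \ge \epsilon\right\}$ is contained in $\left\{\sup_{h\in\mathcal{H}}\max_{i\in[k]}\left|\eps_{S_i}(h)-\widehat{\eps}_{S_i}(h)\right| \ge \epsilon\right\} = \bigcup_{i=1}^{k}\left\{\sup_{h\in\mathcal{H}}\left|\eps_{S_i}(h)-\widehat{\eps}_{S_i}(h)\right| \ge \epsilon\right\}$. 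A union bound over $i\in[k]$ then reduces the claim to the single-source statement $\Pr\left(\sup_{h\in\mathcal{H}}\left|\eps_{S_i}(h) - \widehat{\eps}_{S_i}(h)\right| \ge \epsilon\right) \le 2\left(\frac{me}{d}\right)^d \exp(-2m\epsilon^2)$. For the single-source bound I would run the standard Vapnik--Chervonenkis argument: the loss $\ell_h(\mathbf{x}) = |h(\mathbf{x}) - f_{S_i}(\mathbf{x})|$ depends on $\mathbf{x}$ only through $h(\mathbf{x})$, and since $h$ is binary this dependence is affine, so the projection of $\{\ell_h : h\in\mathcal{H}\}$ onto any $m$-sample has cardinality at most the growth function $\Pi_{\mathcal{H}}(m)$, which by Sauer's lemma is at most $(em/d)^d$ when $VC\textrm{dim}(\mathcal{H}) = d$. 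Combining symmetrization by a ghost sample, a union bound over the at most $(em/d)^d$ distinct loss behaviors on the (double) sample, and Hoeffding's inequality (each $\ell_h$ takes values in $[0,1]$ with mean $\eps_{S_i}(h)$ and empirical average $\widehat{\eps}_{S_i}(h)$) gives a bound of the advertised form; multiplying by the factor $k$ from the first step finishes the proof.

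The main obstacle is the familiar one in deriving VC bounds: the set of effective hypotheses on a sample is itself random, so one cannot union-bound Hoeffding over it directly — the symmetrization step (replacing deviation from the mean by deviation between two independent half-samples, at the cost of a factor $2$ and a mild side condition such as $m\epsilon^2 \ge 2$) is what makes the reduction to a finite class legitimate, and carefully threading the constants through symmetrization, Sauer's lemma, and Hoeffding is the only delicate bookkeeping. The rest — the $\max$-to-$\max$ reduction and the union bound over the $k$ sources — is routine, and it is exactly this union bound that produces the benign linear-in-$k$ (hence, after inversion, $\sqrt{\log k}$) dependence noted after Corollary~\ref{coro:discrepancy}.
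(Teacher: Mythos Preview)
Your proposal is correct and follows essentially the same route as the paper: reduce via $\left|\max_i a_i - \max_i b_i\right|\le \max_i|a_i-b_i|$, union bound over the $k$ sources, and then appeal to the single-source VC uniform convergence bound combined with Sauer's lemma. The paper's proof presents the last step as a direct application of a ``VC inequality'' followed by Hoeffding, whereas you spell out the symmetrization/ghost-sample mechanism behind it; your version is in fact more careful on exactly the point you flag as the main obstacle, but the overall structure is identical.
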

Again, Thm.~\ref{thm:error} can be proved by a combination of concentration inequalities and a reduction from infinite space to finite space, along with the subadditivity of the $\max$ function. Equivalently, we have the following corollary hold:
\begin{corollary}
\label{corollary:error}
Let $\{\mathcal{D}_{S_i}\}_{i=1}^k$ be $k$ source distributions over $\mathcal{X}$. Let $\mathcal{H}$ be a hypothesis class where $VC\textrm{dim}(\mathcal{H}) = d$. If $\{\widehat{\mathcal{D}}_{S_i}\}_{i=1}^k$ are the empirical distributions of $\{\mathcal{D}_{S_i}\}_{i=1}^k$ generated with $m$ $i.i.d.$ samples from each domain, then, for $0 < \delta < 1$, with probability at least $1-\delta$ (over the choice of samples), we have:
$$\sup_{h\in\mathcal{H}}\left|\max_{i\in[k]}\eps_{S_i}(h) - \max_{i\in[k]}\widehat{\eps}_{S_i}(h)\right| \leq \sqrt{\frac{1}{2m}\left(\log\frac{2k}{\delta} + d\log\frac{me}{d}\right)}$$
\end{corollary}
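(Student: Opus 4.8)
The corollary is the ``inverted'' form of Theorem~\ref{thm:error}, so the plan is to pick the $\epsilon$ that makes the tail bound of that theorem equal to $\delta$. Setting $2k(me/d)^d\exp(-2m\epsilon^2)=\delta$ and taking logarithms gives $2m\epsilon^2=\log(2k/\delta)+d\log(me/d)$, i.e. $\epsilon=\sqrt{\tfrac{1}{2m}\big(\log\tfrac{2k}{\delta}+d\log\tfrac{me}{d}\big)}$ (well defined since $2k(me/d)^d>1\geq\delta$ whenever $m\geq d$); for this $\epsilon$ the event in Theorem~\ref{thm:error} has probability at most $\delta$, so its complement, which is exactly the statement of the corollary, has probability at least $1-\delta$. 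Thus the genuine work is Theorem~\ref{thm:error} itself, and I sketch how I would prove that.

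First I would peel off the outer $\max$ over source domains. Because $\max$ is subadditive (equivalently, $t\mapsto\max_{i\in[k]}t_i$ is $1$-Lipschitz in the $\ell_\infty$ norm), $\big|\max_{i\in[k]}\eps_{S_i}(h)-\max_{i\in[k]}\widehat\eps_{S_i}(h)\big|\leq\max_{i\in[k]}\big|\eps_{S_i}(h)-\widehat\eps_{S_i}(h)\big|$ for every $h\in\mathcal H$, and since $[k]$ is finite one may also exchange $\sup_{h\in\mathcal H}$ with $\max_{i\in[k]}$. Hence the event $\{\sup_h|\max_i\eps_{S_i}(h)-\max_i\widehat\eps_{S_i}(h)|\geq\epsilon\}$ is contained in $\bigcup_{i=1}^k\{\sup_h|\eps_{S_i}(h)-\widehat\eps_{S_i}(h)|\geq\epsilon\}$, and a union bound reduces the problem to $k$ copies of a single-domain uniform-deviation probability (independence across domains is irrelevant for the union bound).

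Second, for a fixed source domain $i$ I would invoke the classical VC uniform-convergence argument. Since $h$ and $f_{S_i}$ are $\{0,1\}$-valued, $\eps_{S_i}(h)=\Pr_{\mathbf x\sim\mathcal D_{S_i}}(h(\mathbf x)\neq f_{S_i}(\mathbf x))$ is the expected $0/1$ loss of a function from $\mathcal H_i\defeq\{\mathbf x\mapsto\mathbb I(h(\mathbf x)\neq f_{S_i}(\mathbf x)):h\in\mathcal H\}$, and xor-ing with the fixed $f_{S_i}$ preserves which point sets are shattered, so $\mathrm{VCdim}(\mathcal H_i)=d$. Then $\widehat\eps_{S_i}(h)$ is the empirical mean of this loss and $\sup_h|\eps_{S_i}(h)-\widehat\eps_{S_i}(h)|$ is a standard uniform deviation over a VC class: reduce to the finitely many dichotomies $\mathcal H_i$ realizes on the sample (Sauer--Shelah: at most $(em/d)^d$), apply Hoeffding's inequality to each representative, and union-bound, which produces the per-domain tail $2(me/d)^d\exp(-2m\epsilon^2)$; multiplying by $k$ yields Theorem~\ref{thm:error}, and the inversion above yields the corollary. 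The delicate point is precisely this infinite-to-finite reduction: ``$\mathcal H_i$ restricted to the sample'' is a random set, so one cannot literally union-bound Hoeffding over it; the standard remedy is a ghost-sample/symmetrization argument that replaces the true distribution by a second independent sample before passing to the (now fixed) double-sample projection, and the particular flavor of symmetrization one chooses is what pins down the exponential constant (here the sharp $\exp(-2m\epsilon^2)$, as opposed to the looser $\exp(-m\epsilon^2/8)$ appearing in Theorem~\ref{thm:discrepancy}). Everything else --- the Lipschitz/subadditivity reduction, Sauer--Shelah, Hoeffding, and the final inversion into the corollary --- is routine.
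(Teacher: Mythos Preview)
Your proposal is correct and follows the same route as the paper: the corollary is obtained by inverting the tail bound of Theorem~\ref{thm:error}, and your sketch of that theorem --- subadditivity of $\max$, union bound over the $k$ domains, then a per-domain VC uniform-convergence bound via Sauer--Shelah and Hoeffding --- matches the paper's appendix proof line for line. You are if anything more careful than the paper, correctly flagging that the ``reduce to $\Pi_{\mathcal H}(m)$ dichotomies, then apply Hoeffding'' step requires a symmetrization/ghost-sample argument to be made rigorous, a point the paper's proof writes as a single unjustified inequality.
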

Combining Thm.~\ref{thm:multi} and Corollaries.~\ref{coro:discrepancy},~\ref{corollary:error} and realizing that $VC\textrm{dim}(\mathcal{H}\Delta\mathcal{H})\leq 2VC\textrm{dim}(\mathcal{H})$~\citep{anthony2009neural}, we have the following theorem:
\begin{theorem}
\label{thm:main}
Let $\mathcal{D}_T$ and $\{\mathcal{D}_{S_i}\}_{i=1}^k$ be the target distribution and $k$ source distributions over $\mathcal{X}$. Let $\mathcal{H}$ be a hypothesis class where $VC\textrm{dim}(\mathcal{H}) = d$. If $\widehat{\mathcal{D}}_T$ and $\{\widehat{\mathcal{D}}_{S_i}\}_{i=1}^k$ are the empirical distributions of $\mathcal{D}_T$ and $\{\mathcal{D}_{S_i}\}_{i=1}^k$ generated with $m$ $i.i.d.$ samples from each domain, then, for $0 < \delta < 1$, with probability at least $1-\delta$ (over the choice of samples), we have:
\begin{align}
\eps_T(h) &\leq \max_{i\in[k]}\widehat{\eps}_{S_i}(h) + \sqrt{\frac{1}{2m}\left(\log\frac{4k}{\delta} + d\log\frac{me}{d}\right)} 
+ \frac{1}{2}d_{\mathcal{H}\Delta\mathcal{H}}(\widehat{\mathcal{D}}_T; \{\widehat{\mathcal{D}}_{S_i}\}_{i=1}^k) + \sqrt{\frac{2}{m}\left(\log\frac{8k}{\delta} + 2d\log\frac{me}{2d}\right)}
 + \lambda \nonumber\\
  & = \max_{i\in[k]}\widehat{\eps}_{S_i}(h) + \frac{1}{2}d_{\mathcal{H}\Delta\mathcal{H}}(\widehat{\mathcal{D}}_T; \{\widehat{\mathcal{D}}_{S_i}\}_{i=1}^k) + \lambda + O\left(\sqrt{\frac{1}{m}\left(\log\frac{k}{\delta} + d\log\frac{me}{d}\right)}\right)
\label{equ:final}
\end{align}
\end{theorem}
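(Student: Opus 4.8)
The plan is to assemble Theorem~\ref{thm:main} from the three results already in hand, treating the argument as a chain of two substitutions followed by a union bound over their failure events. The deterministic starting point is Theorem~\ref{thm:multi}, which gives, for every $h\in\mathcal{H}$,
\[
\eps_T(h)\ \le\ \max_{i\in[k]}\eps_{S_i}(h)\ +\ \lambda\ +\ \tfrac12 d_{\mathcal{H}\Delta\mathcal{H}}(\mathcal{D}_T;\{\mathcal{D}_{S_i}\}_{i=1}^k).
\]
It then remains to replace the two population quantities on the right — the worst-case source risk and the worst-case $\mathcal{H}\Delta\mathcal{H}$-discrepancy — by their empirical counterparts, each at the cost of a uniform deviation term.

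For the source-risk term I would invoke Corollary~\ref{corollary:error} with confidence parameter $\delta/2$: with probability at least $1-\delta/2$, simultaneously for all $h\in\mathcal{H}$,
\[
\max_{i\in[k]}\eps_{S_i}(h)\ \le\ \max_{i\in[k]}\widehat\eps_{S_i}(h)\ +\ \sqrt{\tfrac1{2m}\!\left(\log\tfrac{4k}{\delta}+d\log\tfrac{me}{d}\right)},
\]
using $\log\frac{2k}{\delta/2}=\log\frac{4k}{\delta}$. For the discrepancy term I would apply Corollary~\ref{coro:discrepancy}, but with the hypothesis class $\mathcal{H}\Delta\mathcal{H}$ in place of $\mathcal{H}$; writing $d'\defeq VC\textrm{dim}(\mathcal{H}\Delta\mathcal{H})$ and again taking confidence $\delta/2$, with probability at least $1-\delta/2$,
\[
d_{\mathcal{H}\Delta\mathcal{H}}(\mathcal{D}_T;\{\mathcal{D}_{S_i}\}_{i=1}^k)\ \le\ d_{\mathcal{H}\Delta\mathcal{H}}(\widehat{\mathcal{D}}_T;\{\widehat{\mathcal{D}}_{S_i}\}_{i=1}^k)\ +\ 2\sqrt{\tfrac2m\!\left(\log\tfrac{8k}{\delta}+d'\log\tfrac{em}{d'}\right)}.
\]

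The one place that calls for a small argument is passing from $d'$ to $2d$. By the standard fact $VC\textrm{dim}(\mathcal{H}\Delta\mathcal{H})\le 2\,VC\textrm{dim}(\mathcal{H})=2d$~\citep{anthony2009neural}, and since the map $x\mapsto x\log(em/x)$ is nondecreasing on $(0,m]$ (its derivative is $\log(m/x)\ge 0$ there), we get $d'\log\frac{em}{d'}\le 2d\log\frac{em}{2d}$ provided $2d\le m$, which we may assume since otherwise the bound is vacuous. Substituting this into the previous display and multiplying through by $\tfrac12$, the discrepancy contribution becomes exactly $\tfrac12 d_{\mathcal{H}\Delta\mathcal{H}}(\widehat{\mathcal{D}}_T;\{\widehat{\mathcal{D}}_{S_i}\}) + \sqrt{\tfrac2m(\log\tfrac{8k}{\delta}+2d\log\tfrac{me}{2d})}$, matching the third and fourth terms of~\eqref{equ:final}.

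Finally I would take a union bound over the two failure events, each of probability at most $\delta/2$, so that with probability at least $1-\delta$ both empirical replacements hold simultaneously for every $h\in\mathcal{H}$; chaining the three inequalities yields the first line of~\eqref{equ:final}, and the second line follows by absorbing both square-root terms into a single $O(\sqrt{(1/m)(\log(k/\delta)+d\log(me/d))})$. I do not anticipate a genuine obstacle here: the real content lives in Theorems~\ref{thm:multi},~\ref{thm:discrepancy}, and~\ref{thm:error}, and what remains is careful accounting of the confidence budget across the two events and the monotonicity-based $VC$-dimension substitution described above.
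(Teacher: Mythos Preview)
Your proposal is correct and follows exactly the route the paper indicates: the paper simply states that Theorem~\ref{thm:main} is obtained by combining Theorem~\ref{thm:multi} with Corollaries~\ref{coro:discrepancy} and~\ref{corollary:error} together with $VC\textrm{dim}(\mathcal{H}\Delta\mathcal{H})\le 2\,VC\textrm{dim}(\mathcal{H})$, and you have filled in precisely those details, including the $\delta/2$ split and the monotonicity argument for $x\mapsto x\log(em/x)$ that the paper leaves implicit.
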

\textbf{Remark}. Thm.~\ref{thm:main} has a nice interpretation for each term: the first term measures the worst case accuracy of hypothesis $h$ on the $k$ source domains, and the second term measures the discrepancy between the target domain and the $k$ source domains. For domain adaptation to succeed in the multiple sources setting, we have to expect these two terms to be small: we pick our hypothesis $h$ based on its source training errors, and it will generalize only if the discrepancy between sources and target is small. The third term $\lambda$ is the optimal error we can hope to achieve. Hence, if $\lambda$ is large, one should not hope the generalization error to be small by training on the source domains.~\footnote{Of course it is still possible that $\eps_T(h)$ is small while $\lambda$ is large, but in domain adaptation we do not have access to labeled samples from $\mathcal{D}_T$.}
The last term bounds the additional error we may incur because of the possible bias from finite samples. It is also worth pointing out that these four terms appearing in the generalization bound also capture the tradeoff between using a rich hypothesis class $\mathcal{H}$ and a limited one as we discussed above: when using a richer hypothesis class, the first and the third terms in the bound will decrease, while the value of the second term will increase; on the other hand, choosing a limited hypothesis class can decrease the value of the second term, but we may incur additional source training errors and a large $\lambda$ due to the simplicity of $\mathcal{H}$. One interesting prediction implied by Thm.~\ref{thm:main} is that the performance on the target domain depends on the worst empirical error among multiple source domains, i.e., it is not always beneficial to naively incorporate more source domains into training. As we will see in the experiment, this is indeed the case in many real-world problems.

\paragraph{Comparison with Existing Bounds} First, it is easy to see that, upto a multiplicative constant, our bound in (\ref{equ:final}) reduces to the one in Thm.~\ref{thm:shai} when there is only one source domain ($k = 1$). Hence Thm.~\ref{thm:main} can be treated as a generalization of Thm.~\ref{thm:shai}. \citet{blitzer2008learning} give a generalization bound for semi-supervised multisource domain adaptation where, besides labeled instances from multiple source domains, the algorithm also has access to a fraction of labeled instances from the target domain. Although in general our bound and the one in~\citep[Thm. 3]{blitzer2008learning} are incomparable, it is instructive to see the connections and differences between them: on one hand, the multiplicative constants of the discrepancy measure and the optimal error in our bound are half of those in \citet{blitzer2008learning}'s bound, leading to a tighter bound; on the other hand, because of the access to labeled instances from the target domain, their bound is expressed relative to the optimal error rate on the target domain, while ours is in terms of the empirical error on the source domain. Finally, thanks to our generalized definition of $d_{\mathcal{H}}(\mathcal{D}_T; \{\mathcal{D}_{S_i}\}_{i=1}^k)$, we do not need to manually specify the optimal combination vector $\alpha$ in \citep[Thm. 3]{blitzer2008learning}, which is unknown in practice. \citet{mansour2009mixture} also give a generalization bound for multisource domain adaptation under the assumption that the target distribution is a mixture of the $k$ sources and the target hypothesis can be represented as a convex combination of the source hypotheses. While the distance measure we use assumes 0-1 loss function, their generalized discrepancy measure can also be applied for other losses functions~\citep{mansour2009domain,mansour2009multiple,mansour2009mixture}. 

\section{Multisource Domain Adaptation with Adversarial Neural Networks}
\label{sec:model}
In this section we shall describe a neural network based implementation to minimize the generalization bound we derive in Thm.~\ref{thm:main}. The key idea is to reformulate the generalization bound by a minimax saddle point problem and optimize it via adversarial training.

\begin{figure}[htb]
\begin{center}
\includegraphics[width= \linewidth]{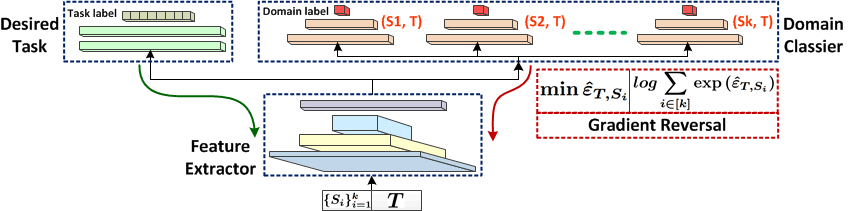}
\end{center}
\caption{MDANs Network architecture. Feature extractor, domain classifier, and task learning are combined in one training process. Hard version: the source that achieves the minimum domain classification error is backpropagated with gradient reversal; Smooth version: all the domain classification risks over $k$ source domains are combined and backpropagated adaptively with gradient reversal.}
\label{fig:arch}
\end{figure}

Suppose we are given samples drawn from $k$ source domains $\{\mathcal{D}_{S_i}\}$, each of which contains $m$ instance-label pairs. Additionally, we also have access to unlabeled instances sampled from the target domain $\mathcal{D}_T$. Once we fix our hypothesis class $\mathcal{H}$, the last two terms in the generalization bound (\ref{equ:final}) will be fixed; hence we can only hope to minimize the bound by minimizing the first two terms, i.e., the maximum source training error and the discrepancy between source domains and target domain. The idea is to train a neural network to learn a representation with the following two properties: 1). indistinguishable between the $k$ source domains and the target domain; 2). informative enough for our desired task to succeed. Note that both requirements are necessary: without the second property, a neural network can learn trivial random noise representations for all the domains, and such representations cannot be distinguished by any discriminator; without the first property, the learned representation does not necessarily generalize to the unseen target domain. Taking these two properties into consideration, we propose the following optimization problem:

\begin{equation}
\text{minimize}\quad \max_{i\in[k]}\left(\widehat{\eps}_{S_i}(h) + \frac{1}{2}d_{\mathcal{H}\Delta\mathcal{H}}(\widehat{\mathcal{D}}_T; \{\widehat{\mathcal{D}}_{S_i}\}_{i=1}^k)\right)
\label{equ:opt}
\end{equation}
One key observation that leads to a practical approximation of $d_{\mathcal{H}\Delta\mathcal{H}}(\widehat{\mathcal{D}}_T; \{\widehat{\mathcal{D}}_{S_i}\}_{i=1}^k)$ from \citet{ben2007analysis} is that computing the discrepancy measure is closely related to learning a classifier that is able to disintuish samples from different domains: 
\begin{equation*}
d_{\mathcal{H}\Delta\mathcal{H}}(\widehat{\mathcal{D}}_T; \{\widehat{\mathcal{D}}_{S_i}\}_{i=1}^k) = \max_{i\in[k]} \left(1 - 2\min_{h\in \mathcal{H}\Delta\mathcal{H}} \left(\frac{1}{2m}\sum_{\mathbf{x}\sim\widehat{\mathcal{D}}_T}\mathbb{I}(h(\mathbf{x}) = 1) + \frac{1}{2m}\sum_{\mathbf{x}\sim\widehat{\mathcal{D}}_{S_i}}\mathbb{I}(h(\mathbf{x} = 0))\right)\right)
\end{equation*}
Let $\widehat{\eps}_{T, S_i}(h)$ be the empirical risk of hypothesis $h$ in the domain discriminating task. Ignoring the constant terms that do not affect the optimization formulation, moving the $\max$ operator out, we can reformulate (\ref{equ:opt}) as:
\begin{equation}
\label{equ:newopt}
\text{minimize}\quad    \max_{i\in[k]}\left( \widehat{\eps}_{S_i}(h) - \min_{h'\in \mathcal{H}\Delta\mathcal{H}}\widehat{\eps}_{T, S_i}(h')\right)
\end{equation}
The two terms in (\ref{equ:newopt}) exactly correspond to the two criteria we just proposed: the first term asks for an informative feature representation for our desired task to succeed, while the second term captures the notion of invariant feature representations between different domains. 

\begin{algorithm}[htb]
\centering
\caption{Multiple Source Domain Adaptation via Adversarial Training}
\label{alg:dann}
\begin{algorithmic}[1]
\FOR {$t = 1$ to $\infty$}
    \STATE  Sample $\{S^{(t)}_i\}_{i=1}^k$ and $T^{(t)}$ from $\{\widehat{\mathcal{D}}_{S_i}\}_{i=1}^k$ and $\widehat{\mathcal{D}}_T$, each of size $m$
    \FOR        {$i = 1$ to $k$}
        \STATE  Compute $\widehat{\eps}^{(t)}_i \defeq \widehat{\eps}_{S^{(t)}_i}(h) - \min_{h'\in \mathcal{H}\Delta\mathcal{H}}\widehat{\eps}_{T^{(t)}, S^{(t)}_i}(h')$
        \STATE  Compute $w^{(t)}_i\defeq \exp(\widehat{\eps}^{(t)}_i)$
    \ENDFOR 
    \STATE  \textbf{\# Hard version}
    \STATE  Select $i^{(t)}\defeq \argmax_{i\in[k]}\widehat{\eps}^{(t)}_i$
    \STATE  Update parameters via backpropagating gradient of $\widehat{\eps}^{(t)}_{i^{(t)}}$
    \STATE  \textbf{\# Smoothed version}
    \FOR    {$i = 1$ to $k$}
        \STATE  Normalize $w^{(t)}_i\leftarrow w^{(t)}_i / \sum_{i'\in[k]}w^{(t)}_{i'}$
    \ENDFOR
    \STATE  Update parameters via backpropagating gradient of $\sum_{i\in[k]}w^{(t)}_i\widehat{\eps}^{(t)}_{i}$
\ENDFOR
\end{algorithmic}
\end{algorithm}

Inspired by~\citet{ganin2016domain}, we use the gradient reversal layer to effectively implement (\ref{equ:newopt}) by backpropagation. The network architecture is shown in Figure. \ref{fig:arch}. The pseudo-code is listed in Alg.~\ref{alg:dann} (the hard version). One notable drawback of the hard version in Alg.~\ref{alg:dann} is that in each iteration the algorithm only updates its parameter based on the gradient from one of the $k$ domains. This is data inefficient and can waste our computational resources in the forward process. To improve this, we approximate the $\max$ function in (\ref{equ:newopt}) by the log-sum-exp function, which is a frequently used smooth approximation of the $\max$ function. Define $\widehat{\eps}_i(h) \defeq \widehat{\eps}_{S_i}(h) - \min_{h'\in \mathcal{H}\Delta\mathcal{H}}\widehat{\eps}_{T, S_i}(h')$: 
$$\max_{i\in[k]}\widehat{\eps}_i(h)\approx \frac{1}{\gamma}\log\sum_{i\in[k]}\exp(\gamma\widehat{\eps}_i(h))$$
where $\gamma > 0$ is a parameter that controls the accuracy of this approximation. As $\gamma\to\infty$, $\frac{1}{\gamma}\log\sum_{i\in[k]}\exp(\gamma\widehat{\eps}_i(h)) \to \max_{i\in[k]}\widehat{\eps}_i(h)$. Correspondingly, we can formulate a smoothed version of (\ref{equ:newopt}) as:
\begin{equation}
\label{equ:softopt}
\text{minimize}\quad    \frac{1}{\gamma}\log\sum_{i\in[k]}\exp\left(\gamma( \widehat{\eps}_{S_i}(h) - \min_{h'\in \mathcal{H}\Delta\mathcal{H}}\widehat{\eps}_{T, S_i}(h'))\right)
\end{equation}
During the optimization, (\ref{equ:softopt}) naturally provides an adaptive weighting scheme for the $k$ source domains depending on their relative error. Use $\theta$ to denote all the model parameters, then:
\begin{equation}
\frac{\partial}{\partial \theta}\frac{1}{\gamma}\log\sum_{i\in[k]}\exp\left(\gamma(\widehat{\eps}_{S_i}(h) - \min_{h'\in \mathcal{H}\Delta\mathcal{H}}\widehat{\eps}_{T, S_i}(h'))\right) = \sum_{i\in[k]}\frac{\exp\gamma\widehat{\eps}_i(h)}{\sum_{i'\in[k]}\exp\gamma\widehat{\eps}_{i'}(h)}\frac{\partial \widehat{\eps}_i(h)}{\partial \theta}
\label{equ:weighted}
\end{equation}
The approximation trick not only smooths the objective, but also provides a principled and adaptive way to combine all the gradients from the $k$ source domains. In words, \eqref{equ:weighted} says that the gradient of MDAN is a convex combination of the gradients from all the domains. The larger the error from one domain, the larger the combination weight in the ensemble. We summarize this algorithm in the smoothed version of Alg.~\ref{alg:dann}. Note that both algorithms, including the hard version and the smoothed version, reduce to the DANN algorithm~\citep{ganin2016domain} when there is only one source domain. 

\section{Experiments}
\label{sec:Experiments}
We evaluate both hard and soft MDANs and compare them with state-of-the-art methods on three real-world datasets: the Amazon benchmark dataset~\citep{chen2012marginalized} for sentiment analysis, a digit classification task that includes 4 datasets: MNIST~\citep{lecun1998gradient}, MNIST-M~\citep{ganin2016domain}, SVHN~\citep{netzer2011reading}, and SynthDigits~\citep{ganin2016domain}, and a public, large-scale image dataset on vehicle counting from city cameras~\citep{zhang2017understanding}. Details about network architecture and training parameters of proposed and baseline methods, and detailed dataset description will be introduced in the appendix.

\subsection{Amazon Reviews}
Domains within the dataset consist of reviews on a specific kind of product (Books, DVDs, Electronics, and Kitchen appliances). Reviews are encoded as $5000$ dimensional feature vectors of unigrams and bigrams, with binary labels indicating sentiment. We conduct 4 experiments: for each of them, we pick one product as target domain and the rest as source domains. Each source domain has $2000$ labeled examples, and the target test set has $3000$ to $6000$ examples. During training, we randomly sample the same number of unlabeled target examples as the source examples in each mini-batch. We implement the Hard-Max and Soft-Max methods according to Alg.~\ref{alg:dann}, and compare them with three baselines: MLPNet, marginalized stacked denoising autoencoders (mSDA)~\citep{chen2012marginalized}, and DANN~\citep{ganin2016domain}. DANN cannot be directly applied in multiple source domains setting. In order to make a comparison, we use two protocols. The first one is to combine all the source domains into a single one and train it using DANN, which we denote as (cDANN). The second protocol is to train multiple DANNs separately, where each one corresponds to a source-target pair. Among all the DANNs, we report the one achieving the best performance on the target domain. We denote this experiment as (sDANN). For fair comparison, all these models are built on the same basic network structure with one input layer ($5000$ units) and three hidden layers ($1000$, $500$, $100$ units). 

\paragraph{Results and Analysis} We show the accuracy of different methods in Table~\ref{tb:amazon}. Clearly, Soft-Max significantly outperforms all other methods in most settings. When Kitchen is the target domain, cDANN performs slightly better than Soft-Max, and all the methods perform close to each other. Hard-Max is typically slightly worse than Soft-Max. This is mainly due to the low data-efficiency of the Hard-Max model (Section \ref{sec:model}, Eq. \ref{equ:newopt}, Eq. \ref{equ:softopt}). We argue that with more training iterations, the performance of Hard-Max can be further improved. These results verify the effectiveness of MDANs for multisource domain adaptation. To validate the statistical significance of the results, we run a non-parametric Wilcoxon signed-ranked test for each task to compare Soft-Max with the other competitors, as shown in Table~\ref{tb:amazon2}. Each cell corresponds to the $p$-value of a Wilcoxon test between Soft-Max and one of the other methods, under the null hypothesis that the two paired samples have the same mean. From these p-values, we see Soft-Max is convincingly better than other methods.

\begin{table}[htb]
\centering
\caption{Sentiment classification accuracy.}
\label{tb:amazon}
\begin{tabular}{c||c|c|c|c|c|c}\hline
\multirow{2}{*}{Train/Test} & \multirow{2}{*}{\textbf{MLPNet}} & \multirow{2}{*}{\textbf{mSDA}} & \multirow{2}{*}{\textbf{sDANN}} & \multirow{2}{*}{\textbf{cDANN}} & \multicolumn{2}{c}{\textbf{MDANs}} \\ \cline{6-7} &  &  &  &  & H-Max & S-Max \\\hline
\textbf{D+E+K/B} & 0.7655 & 0.7698 & 0.7650 & 0.7789 & 0.7845 & \textbf{0.7863} \\ 
\textbf{B+E+K/D} & 0.7588 & 0.7861 & 0.7732 & 0.7886 & 0.7797 & \textbf{0.8065} \\ 
\textbf{B+D+K/E} & 0.8460 & 0.8198 & 0.8381 & 0.8491 & 0.8483 & \textbf{0.8534} \\ 
\textbf{B+D+E/K} & 0.8545 & 0.8426 & 0.8433 & \textbf{0.8639} & 0.8580 & 0.8626 \\ \hline
\end{tabular}
\end{table}

\begin{table}[htb]
\centering
\caption{$p$-values under Wilcoxon test.}
\label{tb:amazon2}
\begin{tabular}{c||c|c|c|c|c}
\hline
 & \textbf{MLPNet} & \textbf{mSDA} & \textbf{sDANN} & \textbf{cDANN} & \textbf{H-Max} \\ \hline
 & \textbf{S-Max} & \textbf{S-Max} & \textbf{S-Max} & \textbf{S-Max} & \textbf{S-Max} \\ \hline
\textbf{B} & 0.550 & 0.101 & 0.521 & 0.013 & 0.946 \\ 
\textbf{D} & 0.000 & 0.072 & 0.000 & 0.051 & 0.000 \\ 
\textbf{E} & 0.066 & 0.000 & 0.097 & 0.150 & 0.022 \\ 
\textbf{K} & 0.306 & 0.001 & 0.001 & 0.239 & 0.008 \\\hline
\end{tabular}
\end{table}

\subsection{Digits Datasets}
Following the setting in~\citep{ganin2016domain}, we combine four popular digits datasets (MNIST, MNIST-M, SVHN, and SynthDigits) to build the multisource domain dataset. We take each of MNIST-M, SVHN, and MNIST as target domain in turn, and the rest as sources. Each source domain has $20,000$ labeled images and the target test set has $9,000$ examples. We compare Hard-Max and Soft-Max of MDANs with five baselines: i). \emph{best-Single-Source}. A basic network trained on each source domain ($20,000$ images) without domain adaptation and tested on the target domain. Among the three models, we report the one achieves the best performance on the test set. ii). \emph{Combine-Source}. A basic network trained on a combination of three source domains ($20,000$ images for each) without domain adaptation and tested on the target domain. iii). \emph{best-Single-DANN}. We train DANNs~\citep{ganin2016domain} on each source-target domain pair ($20,000$ images) and test it on target. Again, we report the best score among the three. iv). \emph{Combine-DANN}. We train a single DANN on a combination of three source domains ($20,000$ images for each). v). \emph{Target-only}. It is the basic network trained and tested on the target data. It serves as an upper bound of DA algorithms. All the MDANs and baseline methods are built on the same basic network structure to put them on a equal footing.

\paragraph{Results and Analysis} The classification accuracy is shown in Table~\ref{tb:mnist}. The results show that a naive combination of different training datasets can sometimes even decrease the performance. Furthermore, we observe that adaptation to the SVHN dataset (the third experiment) is hard. In this case, increasing the number of source domains does not help. We conjecture this is due to the large dissimilarity between the SVHN data to the others. For the combined sources, MDANs always perform better than the source-only baseline (MDANs vs. Combine-Source). However, directly training DANN on a combination of multiple sources leads to worse performance when compared with our approach (Combine-DANN vs. MDANs). In fact, this strategy may even lead to worse results than the source-only baseline (Combine-DANN vs. Combine-Source). Surprisingly, using a single domain (best-Single DANN) can sometimes achieve the best result. This means that in domain adaptation the quality of data (how close to the target data) is much more important than the quantity (how many source domains). As a conclusion, this experiment further demonstrates the effectiveness of MDANs when there are multiple source domains available, where a naive combination of multiple sources using DANN may hurt generalization.

\begin{table}[htb]
\small
\centering
\caption{Accuracy on digit classification. Mt: MNIST; Mm: MNIST-M, Sv: SVHN, Sy: SynthDigits.}
\label{tb:mnist}
\scalebox{0.85}{
\begin{tabular}{c||c|c|c|c|c|c|c}
\hline
\multirow{2}{*}{Train/Test} & \multirow{2}{*}{\begin{tabular}{c} best-Single\\ Source \end{tabular}} & \multirow{2}{*}{\begin{tabular}{c} best-Single\\ DANN \end{tabular}} & \multirow{2}{*}{\begin{tabular}{c} Combine\\ Source \end{tabular}} & \multirow{2}{*}{\begin{tabular}{c} Combine\\ DANN \end{tabular}} & \multicolumn{2}{c|}{MDAN} & \multirow{2}{*}{\begin{tabular}{c} Target\\ Only \end{tabular}} \\ \cline{6-7}
 &  &  &  &  & Hard-Max & Soft-Max &  \\ \hline
Sv+Mm+Sy/Mt & 0.964 & 0.967 & 0.938 & 0.925 & 0.976 & \textbf{0.979} & 0.987 \\ \hline
Mt+Sv+Sy/Mm & 0.519 & 0.591 & 0.561 & 0.651 & 0.663 & \textbf{0.687} & 0.901 \\ \hline
Mm+Mt+Sy/Sv & 0.814 & \textbf{0.818} & 0.771 & 0.776 & 0.802 & 0.816 & 0.898 \\ \hline
\end{tabular}}
\end{table}
\begin{table}[htb]
\centering
\small
\caption{Counting error statistics. S is the number of source cameras; T is the target camera id.}
\label{tb:webcam}
\begin{tabular}{c|c||c|c|c|c||c||c|c|c|c}
\hline
\multirow{2}{*}{S} & \multirow{2}{*}{T} & \multicolumn{2}{c|}{MDANs} & \multirow{2}{*}{DANN} & \multirow{2}{*}{FCN} & \multirow{2}{*}{T} & \multicolumn{2}{c|}{MDANs} & \multirow{2}{*}{DANN} & \multirow{2}{*}{FCN} \\ \cline{3-4} \cline{8-9}
 &  & Hard-Max & Soft-Max &  &  &  & Hard-Max & Soft-Max &  &  \\ \hline
2 & A & 1.8101 & \textbf{1.7140} & 1.9490 & 1.9094 & B & 2.5059 & \textbf{2.3438} & 2.5218 & 2.6528 \\ \hline
3 & A & 1.3276 & \textbf{1.2363} & 1.3683 & 1.5545 & B & 1.9092 & \textbf{1.8680} & 2.0122 & 2.4319 \\ \hline
4 & A & 1.3868 & \textbf{1.1965} & 1.5520 & 1.5499 & B & \textbf{1.7375} & 1.8487 & 2.1856 & 2.2351 \\ \hline
5 & A & 1.4021 & \textbf{1.1942} & 1.4156 & 1.7925 & B & 1.7758 & \textbf{1.6016} & 1.7228 & 2.0504 \\ \hline
6 & A & 1.4359 & \textbf{1.2877} & 2.0298 & 1.7505 & B & 1.5912 & \textbf{1.4644} & 1.5484 & 2.2832 \\ \hline
7 & A & 1.4381 & \textbf{1.2984} & 1.5426 & 1.7646 & B & 1.5989 & \textbf{1.5126} & 1.5397 & 1.7324 \\ \hline
\end{tabular}
\end{table}

\begin{figure}[htb]
\begin{center}
\includegraphics[width=\textwidth]{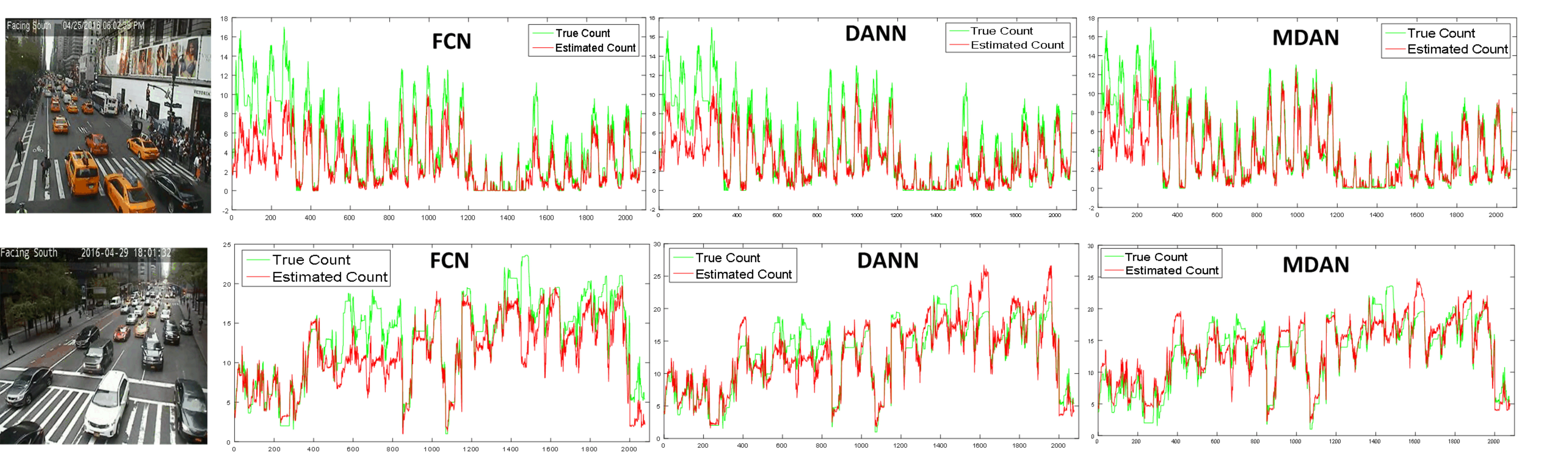}
\end{center}
\caption{Counting results for target camera A (first row) and B (second row). X-frames; Y-Counts.}
\label{fig:multi-cam}
\end{figure}

\begin{figure}[htb]  
\begin{minipage}[t]{0.38\textwidth}
\centering  
\includegraphics[width=\textwidth, height= 3.8cm]{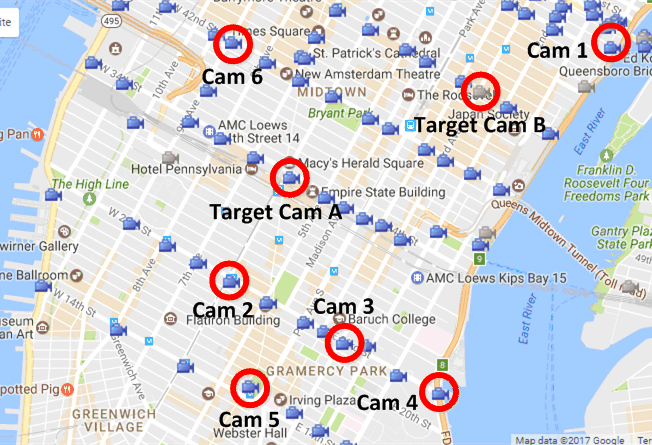} 
\caption{Source$\&$target camera map.}
\label{fig:web}
\end{minipage}  
\begin{minipage}[t]{0.62\textwidth}  
\centering  
\includegraphics[width=\textwidth, height= 3.8cm]{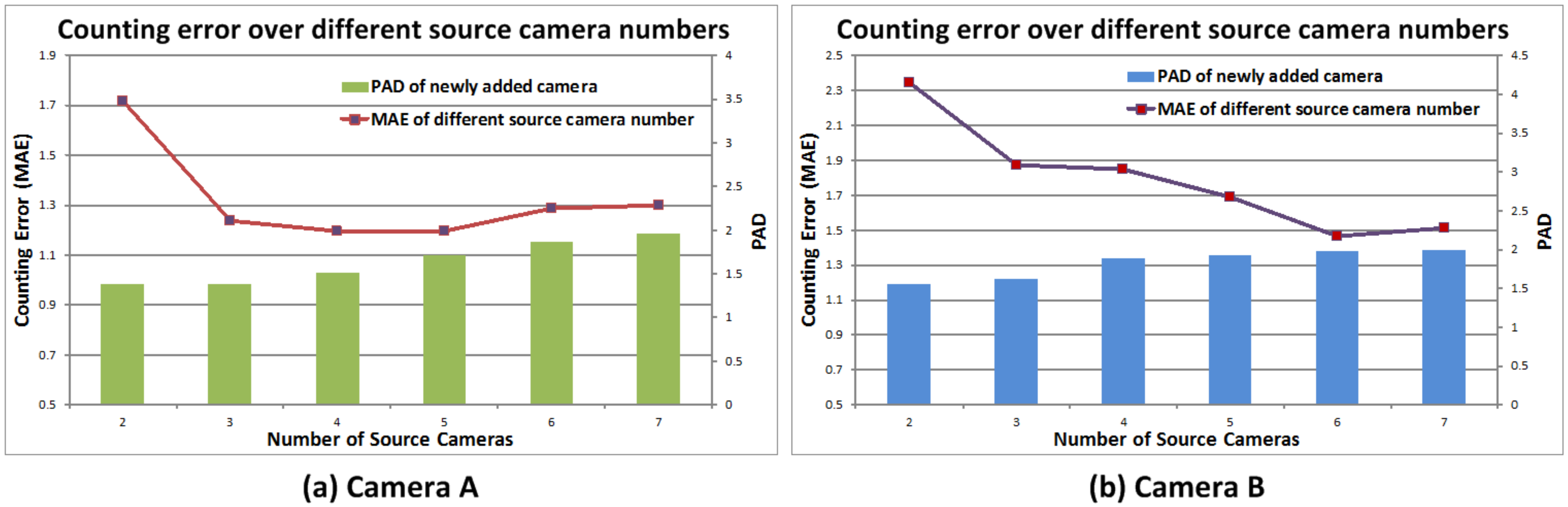}
\caption{Counting error over different source numbers.}  
\label{fig:error_flow}
\end{minipage} 
\end{figure} 

\subsection{WebCamT Vehicle Counting Dataset}
WebCamT is a public dataset for vehicle counting from large-scale city camera videos, which has low resolution ($352\times240$), low frame rate (1 frame/second), and high occlusion. It has $60,000$ frames annotated with vehicle bounding box and count, divided into training and testing sets, with $42,200$ and $17,800$ frames, respectively. Here we demonstrate the effectiveness of MDANs to count vehicles from an unlabeled target camera by adapting from multiple labeled source cameras: we select $8$ cameras that each has more than $2,000$ labeled images for our evaluations. As shown in Fig. \ref{fig:web}, they are located in different intersections of the city with different scenes. Among these $8$ cameras, we randomly pick two cameras and take each camera as the target camera, with the other $7$ cameras as sources. We compute the proxy $\mathcal{A}$-distance (PAD)~\citep{ben2007analysis} between each source camera and the target camera to approximate the divergence between them. We then rank the source cameras by the PAD from low to high and choose the first $k$ cameras to form the $k$ source domains. Thus the proposed methods and baselines can be evaluated on different numbers of sources (from $2$ to $7$). We implement the Hard-Max and Soft-Max MDANs according to Alg.~\ref{alg:dann}, based on the basic vehicle counting network FCN~\citep{zhang2017understanding}. We compare our method with two baselines: FCN~\citep{zhang2017understanding}, a basic network without domain adaptation, and DANN~\citep{ganin2016domain}, implemented on top of the same basic network. We record mean absolute error (MAE) between true count and estimated count.

\paragraph{Results and Analysis} The counting error of different methods is compared in Table~\ref{tb:webcam}. The Hard-Max version achieves lower error than DANN and FCN in most settings for both target cameras. The Soft-Max approximation outperforms all the baselines and the Hard-Max in most settings, demonstrating the effectiveness of the smooth and adaptative approximation. The lowest MAE achieved by Soft-Max is $1.1942$. Such MAE means that there is only around one vehicle miscount for each frame (the average number of vehicles in one frame is around $20$). Fig.~\ref{fig:multi-cam} shows the counting results of Soft-Max for the two target cameras under the $5$ source cameras setting. We can see that the proposed method accurately counts the vehicles of each target camera for long time sequences. Does adding more source cameras always help improve the performance on the target camera? To answer this question, we analyze the counting error when we vary the number of source cameras as shown in Fig.~\ref{fig:error_flow}. From the curves, we see the counting error goes down with more source cameras at the beginning, while it goes up when more sources are added at the end. This phenomenon corresponds to the prediction implied by Thm.~\ref{thm:main} (the last remark in Section \ref{sec:bound}): the performance on the target domain depends on the worst empirical error among multiple source domains, i.e., it is not always beneficial to naively incorporate more source domains into training. To illustrate this prediction better, we show the PAD of the newly added camera (when the source number increases by one) in Fig.~\ref{fig:error_flow}. By observing the PAD and the counting error, we see the performance on the target can degrade when the newly added source camera has large divergence from the target camera.   

\section{Related Work}
\label{sec:extend}
A number of adaptation approaches have been studied in recent years. From the theoretical aspect, several theoretical results have been derived in the form of upper bounds on the generalization target error by learning from the source data. A keypoint of the theoretical frameworks is estimating the distribution shift between source and target. \citet{kifer2004detecting} proposed the $\mathcal{H}$-divergence to measure the similarity between two domains and derived a generalization bound on the target domain using empirical error on the source domain and the $\mathcal{H}$-divergence between the source and the target. This idea has later been extended to multisource domain adaptation~\citep{blitzer2008learning} and the corresponding generalization bound has been developed as well. \citet{ben2010theory} provide a generalization bound for domain adaptation on the target risk which generalizes the standard bound on the source risk. This work formalizes a natural intuition of DA: reducing the two distributions while ensuring a low error on the source domain and justifies many DA algorithms. Based on this work, \citet{mansour2009domain} introduce a new divergence measure: discrepancy distance, whose empirical estimate is based on the Rademacher complexity~\citep{koltchinskii2001rademacher} (rather than the VC-dim). Other theoretical works have also been studied such as~\citep{mansour2012robust} that derives the generalization bounds on the target error by taking use of the robustness properties introduced in~\citep{xu2012robustness}. See \citep{cortes2008sample,mansour2009domain,mansour2009multiple} for more details.

Following the theoretical developments, many DA algorithms have been proposed, such as instance-based methods~\citep{tsuboi2009direct}; feature-based methods~\citep{becker2013non}; and parameter-based methods~\citep{evgeniou2004regularized}. The general approach for domain adaptation starts from algorithms that focus on linear hypothesis class~\citep{blitzer2006domain,germain2013pac,cortes2014domain}. The linear assumption can be relaxed and extended to the non-linear setting using the kernel trick, leading to a reweighting scheme that can be efficiently solved via quadratic programming~\citep{huang2006correcting,gong2013connecting}. Recently, due to the availability of rich data and powerful computational resources, non-linear representations and hypothesis classes have been increasingly explored~\citep{glorot2011domain,baktashmotlagh2013unsupervised,chen2012marginalized,ajakan2014domain,ganin2016domain}. This line of work focuses on building common and robust feature representations among multiple domains using either supervised neural networks~\citep{glorot2011domain}, or unsupervised pretraining using denoising auto-encoders~\citep{vincent2008extracting,vincent2010stacked}. 

Recent studies have shown that deep neural networks can learn more transferable features for DA~\citep{glorot2011domain,donahue2014decaf,yosinski2014transferable}. \citet{bousmalis2016domain} develop domain separation networks to extract image representations that are partitioned into two subspaces: domain private component and cross-domain shared component. The partitioned representation is utilized to reconstruct the images from both domains, improving the DA performance. Reference \citep{long2015learning} enables classifier adaptation by learning the residual function with reference to the target classifier. The main-task of this work is limited to the classification problem. \citet{ganin2016domain} propose a domain-adversarial neural network to learn the domain indiscriminate but main-task discriminative features. Although these works generally outperform non-deep learning based methods, they only focus on the single-source-single-target DA problem, and much work is rather empirical design without statistical guarantees. \citet{hoffman2012discovering} present a domain transform mixture model for multisource DA, which is based on non-deep architectures and is difficult to scale up.

Adversarial training techniques that aim to build feature representations that are indistinguishable between source and target domains have been proposed in the last few years~\citep{ajakan2014domain,ganin2016domain}. Specifically, one of the central ideas is to use neural networks, which are powerful function approximators, to approximate a distance measure known as the $\mathcal{H}$-divergence between two domains~\citep{kifer2004detecting,ben2007analysis,ben2010theory}. The overall algorithm can be viewed as a zero-sum two-player game: one network tries to learn feature representations that can fool the other network, whose goal is to distinguish representations generated from the source domain between those generated from the target domain. The goal of the algorithm is to find a Nash-equilibrium of the game, or the stationary point of the min-max saddle point problem. Ideally, at such equilibrium state, feature representations from the source domain will share the same distributions as those from the target domain, and, as a result, better generalization on the target domain can be expected by training models using only labeled instances from the source domain. 

\section{Conclusion}
We derive a new generalization bound for DA under the setting of multiple source domains with labeled instances and one target domain with unlabeled instances. The new bound has interesting interpretation and reduces to an existing bound when there is only one source domain. Following our theoretical results, we propose MDANs to learn feature representations that are invariant under multiple domain shifts while at the same time being discriminative for the learning task. Both hard and soft versions of MDANs are generalizations of the popular DANN to the case when multiple source domains are available. Empirically, MDANs outperform the state-of-the-art DA methods on three real-world datasets, including a sentiment analysis task, a digit classification task, and a visual vehicle counting task, demonstrating its effectiveness for multisource domain adaptation.

\bibliography{reference}

\newpage
\appendix
\section{Outline}
Organization of the appendix: 1). For the convenience of exposition in showing our technical proofs, we first introduce the technical tools that will be used during our proofs in Sec.~\ref{sec:tools}. 2). We provide detailed proofs for all the claims, lemmas and theorems presented in the main paper in Sec.~\ref{sec:proofs}. 3). We describe more experiment details in Sec.~\ref{sec:exp}, including dataset description, network architecture and training parameters of the proposed and baseline methods, and more analysis of the experimental results. 

\section{Technical Tools}
\label{sec:tools}
\begin{definition}[Growth function]
The \emph{growth function} $\Pi_\mathcal{H}:\Nat\to\Nat$ for a hypothesis class $\mathcal{H}$ is defined by:
$$\forall m\in\Nat,\quad \Pi_\mathcal{H}(m) = \max_{X_m\subseteq\mathcal{X}}\left| \{ \left( h(x_1), \ldots, h(x_m)\right)\mid h\in\mathcal{H} \} \right|$$
where $X_m = \{x_1, \ldots, x_m\}$ is a subset of $\mathcal{X}$ with size $m$. 
\end{definition}
Roughly, the growth function $\Pi_\mathcal{H}(m)$ computes the maximum number of distinct ways in which $m$ points can be classified using hypothesis in $\mathcal{H}$. A closely related concept is the \emph{Vapnik--Chervonenkis dimension} (VC dimension)~\citep{vapnik1998statistical}:
\begin{definition}[VC dimension]
The VC-dimension of a hypothesis class $\mathcal{H}$ is defined as:
$$VC\textrm{dim}(\mathcal{H}) = \max\{m: \Pi_\mathcal{H}(m) = 2^m\}$$
\end{definition}
A well-known result relating $VC\textrm{dim}(\mathcal{H})$ and the growth function $\Pi_\mathcal{H}(m)$ is the Sauer's lemma:
\begin{lemma}[Sauer's lemma]
Let $\mathcal{H}$ be a hypothesis class with $VC\textrm{dim}(\mathcal{H}) = d$. Then, for $m\geq d$, the following inequality holds:
$$\Pi_\mathcal{H}(m)\leq \sum_{i=0}^d{m \choose i}\leq \left(\frac{em}{d}\right)^d$$
\end{lemma}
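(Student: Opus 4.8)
The plan is to prove the two inequalities separately, since they are logically independent: the left inequality $\Pi_\mathcal{H}(m) \le \sum_{i=0}^d \binom{m}{i}$ is the genuine combinatorial content (the Sauer--Shelah bound), while the right inequality $\sum_{i=0}^d \binom{m}{i} \le (em/d)^d$ is a routine numerical estimate valid precisely when $m \ge d$.

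For the combinatorial inequality I would fix a set $X_m = \{x_1,\dots,x_m\}$ achieving the maximum in the definition of $\Pi_\mathcal{H}(m)$ and identify $\mathcal{H}$ with the family of distinct binary labelings it induces on $X_m$; write $N(\mathcal{H}, X_m)$ for the number of such labelings, so that $\Pi_\mathcal{H}(m) = N(\mathcal{H}, X_m)$. The argument is an induction on $m + d$. I pick the coordinate $x_m$ and form two derived families on the $(m-1)$-point set $X_{m-1} = \{x_1,\dots,x_{m-1}\}$: the \emph{projection} $\mathcal{H}'$, consisting of all labelings of $X_{m-1}$ that arise by restricting some labeling in $\mathcal{H}$; and the \emph{collision family} $\mathcal{H}''$, consisting of those labelings of $X_{m-1}$ that extend to $\mathcal{H}$ in \emph{both} possible ways at $x_m$. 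The key counting identity is $N(\mathcal{H}, X_m) = N(\mathcal{H}', X_{m-1}) + N(\mathcal{H}'', X_{m-1})$, because a restricted labeling with a unique extension is counted once on each side, while one with two extensions contributes $2$ on the left and $1+1$ on the right (note $\mathcal{H}'' \subseteq \mathcal{H}'$).

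The crux is the VC-dimension bookkeeping. Since $\mathcal{H}'$ is a restriction of $\mathcal{H}$, any set shattered by $\mathcal{H}'$ is shattered by $\mathcal{H}$, so $VC\textrm{dim}(\mathcal{H}') \le d$. For $\mathcal{H}''$ the subtle point is that if $\mathcal{H}''$ shatters a set $T \subseteq X_{m-1}$, then by the very definition of the collision family $\mathcal{H}$ shatters $T \cup \{x_m\}$; hence $VC\textrm{dim}(\mathcal{H}'') \le d-1$. Applying the inductive hypothesis to both families and combining via Pascal's rule $\binom{m-1}{i} + \binom{m-1}{i-1} = \binom{m}{i}$ telescopes $\sum_{i=0}^d \binom{m-1}{i} + \sum_{i=0}^{d-1}\binom{m-1}{i}$ into $\sum_{i=0}^d \binom{m}{i}$, closing the induction; the base cases $m = 0$ and $d = 0$ are immediate.

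For the numerical inequality I would exploit $m \ge d$, so that $d/m \le 1$ and therefore $(d/m)^d \le (d/m)^i$ for every $0 \le i \le d$. This gives
\begin{equation*}
\left(\frac{d}{m}\right)^d \sum_{i=0}^d \binom{m}{i} \le \sum_{i=0}^d \binom{m}{i}\left(\frac{d}{m}\right)^i \le \sum_{i=0}^m \binom{m}{i}\left(\frac{d}{m}\right)^i = \left(1 + \frac{d}{m}\right)^m \le e^d,
\end{equation*}
where the equality is the binomial theorem and the last step uses $1 + x \le e^x$ with $x = d/m$. Rearranging yields $\sum_{i=0}^d \binom{m}{i} \le (m/d)^d e^d = (em/d)^d$. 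The main obstacle throughout is purely the combinatorial bookkeeping in the first part---establishing the additive decomposition correctly and, especially, the strict drop $VC\textrm{dim}(\mathcal{H}'') \le d-1$---after which both the induction and the numerical estimate are mechanical.
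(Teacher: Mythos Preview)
Your proof is correct and follows the classical Sauer--Shelah argument: the projection/collision decomposition with the induction on $m+d$ for the combinatorial inequality, and the standard $(d/m)^d \le (d/m)^i$ trick followed by the binomial theorem and $1+x \le e^x$ for the numerical estimate. Both halves are carried out cleanly, including the key step that $VC\textrm{dim}(\mathcal{H}'') \le d-1$.

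That said, the paper does not actually prove this lemma: it is listed in the appendix under ``Technical Tools'' alongside Hoeffding's inequality and the VC inequality, cited as a standard result and used without proof in the arguments for Thm.~\ref{thm:discrepancy} and Thm.~\ref{thm:error}. So there is no paper proof to compare against --- your write-up simply supplies what the paper takes as background.
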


The following concentration inequality will be used:
\begin{theorem}[Hoeffding's inequality]
Let $X_1, \ldots, X_n$ be independent random variables where each $X_i$ is bounded by the interval $[a_i, b_i]$. Define the empirical mean of these random variables by $\bar{X}\defeq \frac{1}{n}\sum_{i=1}^n X_i$, then $\forall \eps > 0$:
$$\Pr\left(\left|\bar{X} - \EE[\bar{X}]\right|\geq\eps\right)\leq 2\exp\left(-\frac{2n^2\eps^2}{\sum_{i=1}^n(b_i - a_i)^2}\right)$$
\end{theorem}

The VC inequality allows us to give a uniform bound on the binary classification error of a hypothesis class $\mathcal{H}$ using growth function:
\begin{theorem}[VC inequality]
Let $\Pi_\mathcal{H}$ be the growth function of hypothesis class $\mathcal{H}$. For $h\in\mathcal{H}$, let $\eps(h)$ be the true risk of $h$ w.r.t. the generation distribution $\mathcal{D}$ and the true labeling function $h^*$. Similarly, let $\hat{\eps}_n(h)$ be the empirical risk on a random $i.i.d.$ sample containing $n$ instances from $\mathcal{D}$, then, for $\forall \eps > 0$, the following inequality hold:
$$\Pr\left(\sup_{h\in\mathcal{H}}|\eps(h) - \hat{\eps}_n(h)|\geq\eps\right)\leq 8\Pi_\mathcal{H}(n)\exp\left(-n\eps^2/32\right)$$
\end{theorem}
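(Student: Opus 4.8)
The plan is to prove this uniform convergence estimate by the classical three-step route: symmetrization by a ghost sample, Rademacher randomization, and reduction to a finite hypothesis class via the growth function.

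\emph{Step 1 (symmetrization).} Draw a second, independent i.i.d. sample $S' = \{x'_1,\dots,x'_n\}$ from $\mathcal{D}$, with empirical risk $\hat{\eps}'_n(h)$. I would first establish the symmetrization lemma
$$\Pr\left(\sup_{h\in\mathcal{H}}|\eps(h) - \hat{\eps}_n(h)| \geq \eps\right) \leq 2\,\Pr\left(\sup_{h\in\mathcal{H}}|\hat{\eps}_n(h) - \hat{\eps}'_n(h)| \geq \eps/2\right),$$
valid whenever $n\eps^2 \geq 2$; for $n\eps^2 < 2$ the asserted bound is vacuous since its right-hand side exceeds $1$. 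The argument: on the event that some (sample-dependent) $h_1$ witnesses $|\eps(h_1) - \hat{\eps}_n(h_1)| \geq \eps$, the fresh sample obeys $|\eps(h_1) - \hat{\eps}'_n(h_1)| < \eps/2$ with probability at least $1/2$ (Chebyshev, using $\mathrm{Var}(\hat{\eps}'_n(h_1)) \leq 1/(4n)$), and on that intersection the triangle inequality forces $|\hat{\eps}_n(h_1) - \hat{\eps}'_n(h_1)| \geq \eps/2$.

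\emph{Step 2 (randomization).} The pooled $2n$-sample is i.i.d., hence its law is invariant under independently swapping each pair $x_i \leftrightarrow x'_i$. Writing $\ell(h,x) \defeq |h(x) - h^*(x)| \in \{0,1\}$ and introducing i.i.d.\ Rademacher signs $\sigma_1,\dots,\sigma_n$, this gives
$$\Pr\left(\sup_{h}|\hat{\eps}_n(h) - \hat{\eps}'_n(h)| \geq \eps/2\right) = \Pr_{\,x,x',\sigma}\left(\sup_{h}\left|\tfrac{1}{n}\sum_{i=1}^n \sigma_i\big(\ell(h,x_i) - \ell(h,x'_i)\big)\right| \geq \eps/2\right).$$

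\emph{Step 3 (finite projection, union bound, Hoeffding).} Condition on the $2n$ points. As $h$ ranges over $\mathcal{H}$ the loss vector takes at most $\Pi_\mathcal{H}(2n)$ (equivalently, after the sharper count that matches the stated form, $\Pi_\mathcal{H}(n)$) distinct values, so a union bound reduces the problem to finitely many hypotheses; for each fixed one, $\tfrac1n\sum_i\sigma_i(\ell(h,x_i)-\ell(h,x'_i))$ is a mean of independent, zero-mean terms bounded in $[-1,1]$, and Hoeffding's inequality controls its tail. Taking expectation back over the sample and reinserting the factor $2$ from Step 1 yields a bound of the form $C\,\Pi_\mathcal{H}(n)\exp(-c\,n\eps^2)$; carrying the constants carefully (and absorbing the $n\eps^2<2$ case) pins them down to the stated $8$ and $1/32$.

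\emph{Main obstacle.} The symmetrization step is the only conceptually subtle part — it needs the variance estimate on the ghost sample and the careful handling of the small-$n\eps^2$ regime; after that everything is mechanical, and notably Sauer's lemma is not even required here, only the growth function. The secondary nuisance is bookkeeping the precise constants $8$ and $1/32$, since the most commonly quoted version of this inequality instead reads $4\,\Pi_\mathcal{H}(2n)\exp(-n\eps^2/8)$, and matching the paper's form requires choosing the corresponding variant of the symmetrization/Hoeffding steps.
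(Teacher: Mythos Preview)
The paper does not prove this statement. The VC inequality appears in the appendix's ``Technical Tools'' section as a quoted classical result, alongside Sauer's lemma and Hoeffding's inequality, and is simply invoked later in the proofs of Theorems~\ref{thm:discrepancy} and~\ref{thm:error}. There is nothing to compare your proposal against.

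For what it is worth, your outline is the standard textbook route (ghost-sample symmetrization, Rademacher randomization, projection to a finite class via the growth function, then Hoeffding plus a union bound), and it is correct in spirit. Your own caveat at the end is apt: the most common statement has constants $4\,\Pi_\mathcal{H}(2n)\exp(-n\eps^2/8)$, and the paper's particular form with $8\,\Pi_\mathcal{H}(n)\exp(-n\eps^2/32)$ is one of several variants floating in the literature; different authors make slightly different choices in the symmetrization threshold and in how they split the $\eps$, and the paper gives no indication which derivation it has in mind. Since the paper treats the inequality as a black box, any correct derivation with constants of this order would serve its purposes equally well.
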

Although the above theorem is stated for binary classification error, we can extend it to any bounded error. This will only change the multiplicative constant of the bound. 

\section{Proofs}
\label{sec:proofs}
For all the proofs presented here, the following lemma shown by \citet{blitzer2008learning} will be repeatedly used: 
\begin{lemma}[\citep{blitzer2008learning}]
\label{lemma:shai}
$\forall h, h'\in\mathcal{H},\quad |\eps_S(h, h') - \eps_T(h, h')|\leq \frac{1}{2}d_{\mathcal{H}\Delta\mathcal{H}}(\mathcal{D}_S, \mathcal{D}_T)$.
\end{lemma}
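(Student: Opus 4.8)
The plan is to prove the inequality directly from the definitions; no concentration or VC machinery is needed, and the entire content is a faithful translation between the disagreement error $\eps_S(h,h')$ and the probability of a single set counted by $d_{\mathcal{H}\Delta\mathcal{H}}$. First I would fix an arbitrary pair $h, h' \in \mathcal{H}$ and form the XOR hypothesis $g \defeq h \oplus h'$. By the definition of the symmetric difference class, $g \in \mathcal{H}\Delta\mathcal{H}$, so its support set $A \defeq g^{-1}(\{1\}) = \{\mathbf{x} : h(\mathbf{x}) \neq h'(\mathbf{x})\}$ belongs to $\mathcal{A}_{\mathcal{H}\Delta\mathcal{H}}$, the collection of supports of hypotheses in $\mathcal{H}\Delta\mathcal{H}$.

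The next step is to rewrite each disagreement error as the mass the corresponding distribution assigns to this single set $A$. Since $h$ and $h'$ are binary, $|h(\mathbf{x}) - h'(\mathbf{x})| = \mathbb{I}(h(\mathbf{x}) \neq h'(\mathbf{x}))$, so that
$$\eps_S(h, h') = \EE_{\mathbf{x}\sim\mathcal{D}_S}[\mathbb{I}(h(\mathbf{x}) \neq h'(\mathbf{x}))] = \Pr_{\mathcal{D}_S}(A),$$
and identically $\eps_T(h, h') = \Pr_{\mathcal{D}_T}(A)$.

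Finally I would take the absolute difference and bound it by the supremum over all admissible sets. Since $A \in \mathcal{A}_{\mathcal{H}\Delta\mathcal{H}}$,
$$\left|\eps_S(h, h') - \eps_T(h, h')\right| = \left|\Pr_{\mathcal{D}_S}(A) - \Pr_{\mathcal{D}_T}(A)\right| \leq \sup_{A'\in\mathcal{A}_{\mathcal{H}\Delta\mathcal{H}}}\left|\Pr_{\mathcal{D}_S}(A') - \Pr_{\mathcal{D}_T}(A')\right| = \tfrac{1}{2}d_{\mathcal{H}\Delta\mathcal{H}}(\mathcal{D}_S, \mathcal{D}_T),$$
where the last equality is exactly the definition of $d_{\mathcal{H}\Delta\mathcal{H}}$, with the factor $\tfrac{1}{2}$ cancelling the $2$ in that definition. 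Since $h, h'$ were arbitrary, the claim follows for all $h, h' \in \mathcal{H}$.

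There is no genuine obstacle here; the only point demanding care is the bookkeeping in the first two steps, namely verifying that the XOR of two hypotheses is precisely the object whose support is the disagreement region, so that the disagreement error is literally the probability of a set in $\mathcal{A}_{\mathcal{H}\Delta\mathcal{H}}$. Everything else is a one-line application of the definition of the discrepancy as a supremum.
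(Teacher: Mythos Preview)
Your proof is correct and is exactly the standard argument for this well-known lemma. Note that the paper does not actually give its own proof of this statement: it is quoted as a known result from \citet{blitzer2008learning} and used as a black box in the subsequent proofs, so there is nothing to compare against beyond confirming that your derivation matches the usual one.
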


\subsection{Proof of Thm.~\ref{thm:multi}}
One technical lemma we will frequently use to prove Thm.~\ref{thm:multi} is the triangular inequality w.r.t. $\eps_\mathcal{D}(h)$, $\forall h\in\mathcal{H}$:
\begin{lemma}
For any hypothesis class $\mathcal{H}$ and any distribution $\mathcal{D}$ on $\mathcal{X}$, the following triangular inequality holds:
$$\forall h,h',f\in\mathcal{H},\quad\eps_\mathcal{D}(h, h')\leq \eps_\mathcal{D}(h, f) + \eps_\mathcal{D}(f, h')$$
\label{lemma:t}
\end{lemma}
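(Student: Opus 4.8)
The plan is to prove the inequality pointwise inside the expectation and then integrate. Recall that for $h, h' \in \mathcal{H}$ we have $\eps_\mathcal{D}(h, h') = \EE_{\mathbf{x}\sim\mathcal{D}}[|h(\mathbf{x}) - h'(\mathbf{x})|]$. First I would observe that for every fixed $\mathbf{x}\in\mathcal{X}$, the ordinary triangle inequality for the absolute value on $\RR$ gives
$$|h(\mathbf{x}) - h'(\mathbf{x})| \leq |h(\mathbf{x}) - f(\mathbf{x})| + |f(\mathbf{x}) - h'(\mathbf{x})|.$$
Since this holds for every $\mathbf{x}$, it holds in particular $\mathcal{D}$-almost surely, so I can take expectations of both sides with respect to $\mathbf{x}\sim\mathcal{D}$. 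By monotonicity of expectation on the left and linearity of expectation on the right, this yields
$$\eps_\mathcal{D}(h, h') = \EE[|h - h'|] \leq \EE[|h - f|] + \EE[|f - h'|] = \eps_\mathcal{D}(h, f) + \eps_\mathcal{D}(f, h'),$$
which is exactly the claim.

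Alternatively — and this is the form most convenient for the subsequent use in bounding risks — since $h, h', f$ are all binary-valued hypotheses, $|h(\mathbf{x}) - h'(\mathbf{x})|$ equals the indicator of disagreement, so $\eps_\mathcal{D}(h, h') = \Pr_{\mathbf{x}\sim\mathcal{D}}(h(\mathbf{x})\neq h'(\mathbf{x}))$. The key set-theoretic observation is then the event inclusion $\{h\neq h'\}\subseteq\{h\neq f\}\cup\{f\neq h'\}$ (if $h(\mathbf{x})\neq h'(\mathbf{x})$, then $f(\mathbf{x})$ cannot agree with both, so it disagrees with at least one), and a union bound applied to this inclusion gives the inequality. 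I would present whichever of the two versions reads more cleanly; each is only a couple of lines.

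There is essentially no obstacle here: the statement is a direct consequence of the triangle inequality for $|\cdot|$ combined with monotonicity and linearity of expectation. The only point requiring a moment's care is that the definition of $\eps_\mathcal{D}(\cdot,\cdot)$ permits the second argument to be an arbitrary labeling function into $[0,1]$, but in this lemma all three arguments are hypotheses, so the pointwise bound is immediate and no extra argument is needed. The lemma will subsequently be chained together with Lemma~\ref{lemma:shai} and the minimality of $h^*$ to relate the target risk to the worst-case source risk in the proof of Thm.~\ref{thm:multi}.
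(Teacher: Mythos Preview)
Your proof is correct and takes essentially the same approach as the paper: apply the pointwise triangle inequality $|h(\mathbf{x}) - h'(\mathbf{x})| \leq |h(\mathbf{x}) - f(\mathbf{x})| + |f(\mathbf{x}) - h'(\mathbf{x})|$ and then integrate using linearity and monotonicity of expectation. The alternative indicator/union-bound phrasing you offer is an equivalent restatement for binary hypotheses and is also fine.
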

\begin{proof}
$$
\eps_\mathcal{D}(h, h') = \mathbb{E}_{\mathbf{x}\sim\mathcal{D}}[|h(\mathbf{x}) - h'(\mathbf{x})|] \leq \mathbb{E}_{\mathbf{x}\sim\mathcal{D}}[|h(\mathbf{x}) - f(\mathbf{x})| + |f(\mathbf{x}) - f(\mathbf{x})|] = \eps_\mathcal{D}(h, f) + \eps_\mathcal{D}(f, h')
$$
\end{proof}

Now we are ready to prove Thm.~\ref{thm:multi}:
\population*
\begin{proof}
$\forall h\in\mathcal{H}$, define $i_h\defeq \argmax_{i\in[k]}\eps_{S_i}(h, h^*)$:
\begin{align*}
\eps_T(h) &\leq \eps_T(h^*) + \eps_T(h, h^*) \\
&= \eps_T(h^*) + \eps_T(h, h^*) - \max_{i\in[k]}\eps_{S_i}(h, h^*) + \max_{i\in[k]}\eps_{S_i}(h, h^*)\\
&\leq \eps_T(h^*) + |\eps_T(h, h^*) - \eps_{S_{i_h}}(h, h^*)| + \eps_{S_{i_h}}(h, h^*)\\
&\leq \eps_T(h^*) + \frac{1}{2}d_{\mathcal{H}\Delta\mathcal{H}}(\mathcal{D}_T, \mathcal{D}_{S_{i_h}}) + \eps_{S_{i_h}}(h, h^*)\\
&\leq \eps_T(h^*) + \frac{1}{2}d_{\mathcal{H}\Delta\mathcal{H}}(\mathcal{D}_T; \{\mathcal{D}_{S_{i}}\}_{i=1}^k) + \eps_{S_{i_h}}(h, h^*)\\
&\leq \eps_T(h^*) + \frac{1}{2}d_{\mathcal{H}\Delta\mathcal{H}}(\mathcal{D}_T; \{\mathcal{D}_{S_{i}}\}_{i=1}^k) + \eps_{S_{i_h}}(h) + \eps_{S_{i_h}}(h^*)\\
&\leq \eps_T(h^*) + \frac{1}{2}d_{\mathcal{H}\Delta\mathcal{H}}(\mathcal{D}_T; \{\mathcal{D}_{S_{i}}\}_{i=1}^k) + \max_{i\in[k]}\eps_{S_i}(h) + \max_{i\in[k]}\eps_{S_{i}}(h^*)\\
&= \max_{i\in[k]}\eps_{S_i}(h) + \lambda + \frac{1}{2}d_{\mathcal{H}\Delta\mathcal{H}}(\mathcal{D}_T; \{\mathcal{D}_{S_i}\}_{i=1}^k)
\end{align*}
The first and the fifth inequalities are due to the triangle inequality, and the third inequality is based on Lemma~\ref{lemma:shai}. The second holds due to the property of $|\cdot|$ and the others follow by the definition of $\mathcal{H}$-divergence. 
\end{proof}

\subsection{Proof of Thm.~\ref{thm:discrepancy}}
\discrepancy*
\begin{proof}
\begin{align*}
&\phantom{{}={}} \Pr\left(\left| d_{\mathcal{H}}(\mathcal{D}_T; \{\mathcal{D}_{S_i}\}_{i=1}^k) - d_{\mathcal{H}}(\hat{\mathcal{D}}_T; \{\hat{\mathcal{D}}_{S_i}\}_{i=1}^k)\right| \geq \epsilon\right) \\
&= \Pr\left(\left|  \max_{i\in[k]}\sup_{A\in\mathcal{A}_\mathcal{H}}|\Pr_{\mathcal{D}_T}(A) - \Pr_{\mathcal{D}_{S_i}}(A)| - \max_{i\in[k]}\sup_{A\in\mathcal{A}_\mathcal{H}}|\Pr_{\hat{\mathcal{D}}_T}(A) - \Pr_{\hat{\mathcal{D}}_{S_i}}(A)|   \right|\geq \frac{\epsilon}{2}\right)\\
&\leq \Pr\left(\max_{i\in[k]}\sup_{A\in\mathcal{A}_\mathcal{H}}\left|  |\Pr_{\mathcal{D}_T}(A) - \Pr_{\mathcal{D}_{S_i}}(A)| - |\Pr_{\hat{\mathcal{D}}_T}(A) - \Pr_{\hat{\mathcal{D}}_{S_i}}(A)|   \right|\geq \frac{\epsilon}{2}\right)\\
&= \Pr\left(\exists i\in[k], \exists A\in\mathcal{A}_\mathcal{H}:\left|  |\Pr_{\mathcal{D}_T}(A) - \Pr_{\mathcal{D}_{S_i}}(A)| - |\Pr_{\hat{\mathcal{D}}_T}(A) - \Pr_{\hat{\mathcal{D}}_{S_i}}(A)|   \right|\geq \frac{\epsilon}{2}\right)\\
&\leq \sum_{i=1}^k \Pr\left(\exists A\in\mathcal{A}_\mathcal{H}:\left|  |\Pr_{\mathcal{D}_T}(A) - \Pr_{\mathcal{D}_{S_i}}(A)| - |\Pr_{\hat{\mathcal{D}}_T}(A) - \Pr_{\hat{\mathcal{D}}_{S_i}}(A)|   \right|\geq \frac{\epsilon}{2}\right)\\
&\leq \sum_{i=1}^k \Pr\left(\exists A\in\mathcal{A}_\mathcal{H}: |\Pr_{\mathcal{D}_T}(A) - \Pr_{\hat{\mathcal{D}}_T}(A) | + |\Pr_{\mathcal{D}_{S_i}}(A) - \Pr_{\hat{\mathcal{D}}_{S_i}}(A)|   \geq \frac{\epsilon}{2}\right)\\
&\leq 2k \Pr\left(\exists A\in\mathcal{A}_\mathcal{H}: |\Pr_{\mathcal{D}_T}(A) - \Pr_{\hat{\mathcal{D}}_T}(A) |    \geq \frac{\epsilon}{4}\right)\\
&\leq 2k\cdot\Pi_{\mathcal{A}_\mathcal{H}}(m)\Pr\left(|\Pr_{\mathcal{D}_T}(A) - \Pr_{\hat{\mathcal{D}}_T}(A) |    \geq \frac{\epsilon}{4}\right)\\
&\leq 2k\cdot\Pi_{\mathcal{A}_\mathcal{H}}(m)\cdot 2\exp(-2m\epsilon^2 / 16)\\
&\leq 4k\left(\frac{em}{d}\right)^d \exp(-m\epsilon^2 / 8)
\end{align*}
The first inequality holds due to the sub-additivity of the $\max$ function, and the second inequality is due to the union bound. The third inequality holds because of the triangle inequality, and we use the averaging argument to establish the fourth inequality. The fifth inequality is an application of the VC-inequality, and the sixth is by the Hoeffding's inequality. Finally, we use the Sauer's lemma to prove the last inequality.
\end{proof}

\subsection{Proof of Thm.~\ref{thm:error}}
We now show the detailed proof of Thm.~\ref{thm:error}.
\begin{proof}
\begin{align*}
\Pr\left(\sup_{h\in\mathcal{H}}\left| \max_{i\in[k]}\eps_{S_i}(h) - \max_{i\in[k]}\hat{\eps}_{S_i}(h) \right| \geq \epsilon\right) &\leq \Pr\left(\sup_{h\in\mathcal{H}}\max_{i\in[k]}\left| \eps_{S_i}(h) - \hat{\eps}_{S_i}(h) \right| \geq \epsilon\right)\\
&= \Pr\left(\max_{i\in[k]}\sup_{h\in\mathcal{H}}\left| \eps_{S_i}(h) - \hat{\eps}_{S_i}(h) \right| \geq \epsilon\right)\\
&\leq \sum_{i=1}^k \Pr\left(\sup_{h\in\mathcal{H}}\left| \eps_{S_i}(h) - \hat{\eps}_{S_i}(h) \right| \geq \epsilon\right)\\
&\leq k\cdot \Pi_{\mathcal{H}}(m)\Pr\left(\left| \eps_{S_i}(h) - \hat{\eps}_{S_i}(h) \right| \geq \epsilon\right)\\
&\leq k\cdot \Pi_{\mathcal{H}}(m)\cdot 2\exp(-2m\epsilon^2)\\
&\leq 2k\left(\frac{me}{d}\right)^d\exp(-2m\epsilon^2)
\end{align*}
Again, the first inequality is due to the subadditivity of the $\max$ function, and the second inequality holds due to the union bound. We apply the VC-inequality to bound the third inequality, and Hoeffding's inequality to bound the fourth. Again, the last one is due to Sauer's lemma.
\end{proof}

\subsection{Derivation of the Discrepancy Distance as Classification Error}
We show that the $\mathcal{H}$-divergence is equivalent to a binary classification accuracy in discriminating instances from different domains. Suppose $\mathcal{A}_\mathcal{H}$ is symmetric, i.e., $A\in\mathcal{A}_\mathcal{H}\Leftrightarrow\mathcal{X}\backslash A\in\mathcal{A}_\mathcal{H}$, and we have samples $\{S_i\}_{i=1}^k$ and $T$ from $\{\mathcal{D}_{S_i}\}_{i=1}^k$ and $\mathcal{D}_T$ respectively, each of which is of size $m$, then:
\begin{align*}
d_{\mathcal{H}\Delta\mathcal{H}}(\hat{\mathcal{D}}_T; \{\hat{\mathcal{D}}_{S_i}\}_{i=1}^k) &= \max_{i\in[k]}\sup_{A\in\mathcal{A}_{\mathcal{H}\Delta\mathcal{H}}}|\Pr_{\hat{\mathcal{D}}_T}(A) - \Pr_{\hat{\mathcal{D}}_{S_i}}(A)| \\
&= \max_{i\in[k]}\sup_{h\in \mathcal{H}\Delta\mathcal{H}}|\Pr_{\mathbf{x}\sim\hat{\mathcal{D}}_T}(h(\mathbf{x}) = 1) - \Pr_{\mathbf{x}\sim\hat{\mathcal{D}}_{S_i}}(h(\mathbf{x} = 1))| \\
&= \max_{i\in[k]}\sup_{h\in \mathcal{H}\Delta\mathcal{H}} 1- \left(\Pr_{\mathbf{x}\sim\hat{\mathcal{D}}_T}(h(\mathbf{x}) = 1) + \Pr_{\mathbf{x}\sim\hat{\mathcal{D}}_{S_i}}(h(\mathbf{x} = 0))\right) \\
&= \max_{i\in[k]} \left(1 - 2\min_{h\in \mathcal{H}\Delta\mathcal{H}} \left(\frac{1}{2m}\sum_{\mathbf{x}\sim\hat{\mathcal{D}}_T}\mathbb{I}(h(\mathbf{x}) = 1) + \frac{1}{2m}\sum_{\mathbf{x}\sim\hat{\mathcal{D}}_{S_i}}\mathbb{I}(h(\mathbf{x} = 0))\right)\right)
\end{align*}

\section{Details about Experiments}
\label{sec:exp}

In this section, we describe more details about the datasets and the experimental settings. We extensively evaluate the proposed methods on three datasets: 1). We first evaluate our methods on Amazon Reviews dataset~\citep{chen2012marginalized} for sentiment analysis. 2). We evaluate the proposed methods on the digits classification datasets including MNIST~\citep{lecun1998gradient}, MNIST-M~\citep{ganin2016domain}, SVHN~\citep{netzer2011reading}, and SynthDigits~\citep{ganin2016domain}. 3). We further evaluate the proposed methods on the public dataset WebCamT~\citep{zhang2017understanding} for vehicle counting. It contains 60,000 labeled images from $12$ city cameras with different distributions. Due to the substantial difference between these datasets and their corresponding learning tasks, we will introduce more detailed dataset description, network architecture, and training parameters for each dataset respectively in the following subsections.

\subsection{Details on Amazon Reviews evaluation}

Amazon reviews dataset includes four domains, each one composed of reviews on a specific kind of product (Books, DVDs, Electronics, and Kitchen appliances). Reviews are encoded as $5000$ dimensional feature vectors of unigrams and bigrams. The labels are binary: $0$ if the product is ranked up to $3$ stars, and $1$ if the product is ranked $4$ or $5$ stars.

We take one product domain as target and the other three as source domains. Each source domain has $2000$ labeled examples and the target test set has $3000$ to $6000$ examples. We implement the Hard-Max and Soft-Max methods according to Alg.~\ref{alg:dann}, based on a basic network with one input layer ($5000$ units) and three hidden layers ($1000$, $500$, $100$ units). The network is trained for 50 epochs with dropout rate $0.7$. We compare Hard-Max and Soft-Max with three baselines: \emph{Baseline 1: MLPNet}. It is the basic network of our methods (one input layer and three hidden layers), trained for 50 epochs with dropout rate $0.01$. \emph{Baseline 2: Marginalized Stacked Denoising Autoencoders (mSDA)}~\citep{chen2012marginalized}. It takes the unlabeled parts of both source and target samples to learn a feature map from input space to a new representation space. As a denoising autoencoder algorithm, it finds a feature representation from which one can (approximately) reconstruct the original features of an example from its noisy counterpart. \emph{Baseline 3: DANN}. We implement DANN based on the algorithm described in~\citep{ganin2016domain} with the same basic network as our methods. Hyper parameters of the proposed and baseline methods are selected by cross validation. Table~\ref{tb:para_amazon} summarizes the network architecture and some hyper parameters.


\begin{table}[]
\centering
\caption{Network parameters for proposed and baseline methods}
\label{tb:para_amazon}
\scalebox{0.85}{
\begin{tabular}{|c|c|c|c|c|c|c|c|}
\hline
Method & Input layer & Hidden layers & Epochs & Dropout & Domains & \begin{tabular}{c} Domain adaptation\\ weight \end{tabular} & Gamma\\ \hline
MLPNet & 5000 & (1000, 500, 100) & 50 & 0.01 & N/A & N/A & N/A \\ \hline
DANN & 5000 & (1000, 500, 100) & 50 & 0.01 & 1 & 0.01 & N/A \\ \hline
MDAN & 5000 & (1000, 500, 100) & 50 & 0.7 & 3 & 0.1 & 10 \\ \hline
\end{tabular}}
\end{table}

\subsection{Details on Digit Datasets evaluation}

We evaluate the proposed methods on the digits classification problem. Following the experiments in ~\citep{ganin2016domain}, we combine four popular digits datasets-MNIST, MNIST-M, SVHN, and SynthDigits to build the multi-source domain dataset. MNIST is a handwritten digits database with $60,000$ training examples, and $10,000$ testing examples. The digits have been size-normalized and centered in a $28\times28$ image. MNIST-M is generated by blending digits from the original MNIST set over patches randomly extracted from color photos from BSDS500~\citep{arbelaez2011contour,ganin2016domain}. It has $59,001$ training images and $9,001$ testing images with $32\times32$ resolution. An output sample is produced by taking a patch from a photo and inverting its pixels at positions corresponding to the pixels of a digit. For DA problems, this domain is quite distinct from MNIST, for the background and the strokes are no longer constant. SVHN is a real-world house number dataset with $73,257$ training images and $26,032$ testing images. It can be seen as similar to MNIST, but comes from a significantly harder, unsolved, real world problem. SynthDigits consists of $500;000$ digit images generated by~\citet{ganin2016domain} from WindowsTM fonts by varying the text, positioning, orientation, background and stroke colors, and the amount of blur. The degrees of variation were chosen to simulate SVHN, but the two datasets are still rather distinct, with the biggest difference being the structured clutter in the background of SVHN images.

We take MNIST-M, SVHN, and MNIST as target domain in turn, and the remaining three as sources. We implement the Hard-Max and Soft-Max versions according to Alg.~\ref{alg:dann} based on a basic network, as shown in Fig.~\ref{fig:arch_mnist}. The baseline methods are also built on the same basic network structure to put them on a equal footing. The network structure and parameters of MDANs are illustrated in Fig.~\ref{fig:arch_mnist}.  The learning rate is initialized by $0.01$ and adjusted by the first and second order momentum in the training process. The domain adaptation parameter of MDANs is selected by cross validation. In each mini-batch of MDANs training process, we randomly sample the same number of unlabeled target images as the number of the source images. 

\begin{figure}[htb]
\begin{center}
\includegraphics[width=\textwidth]{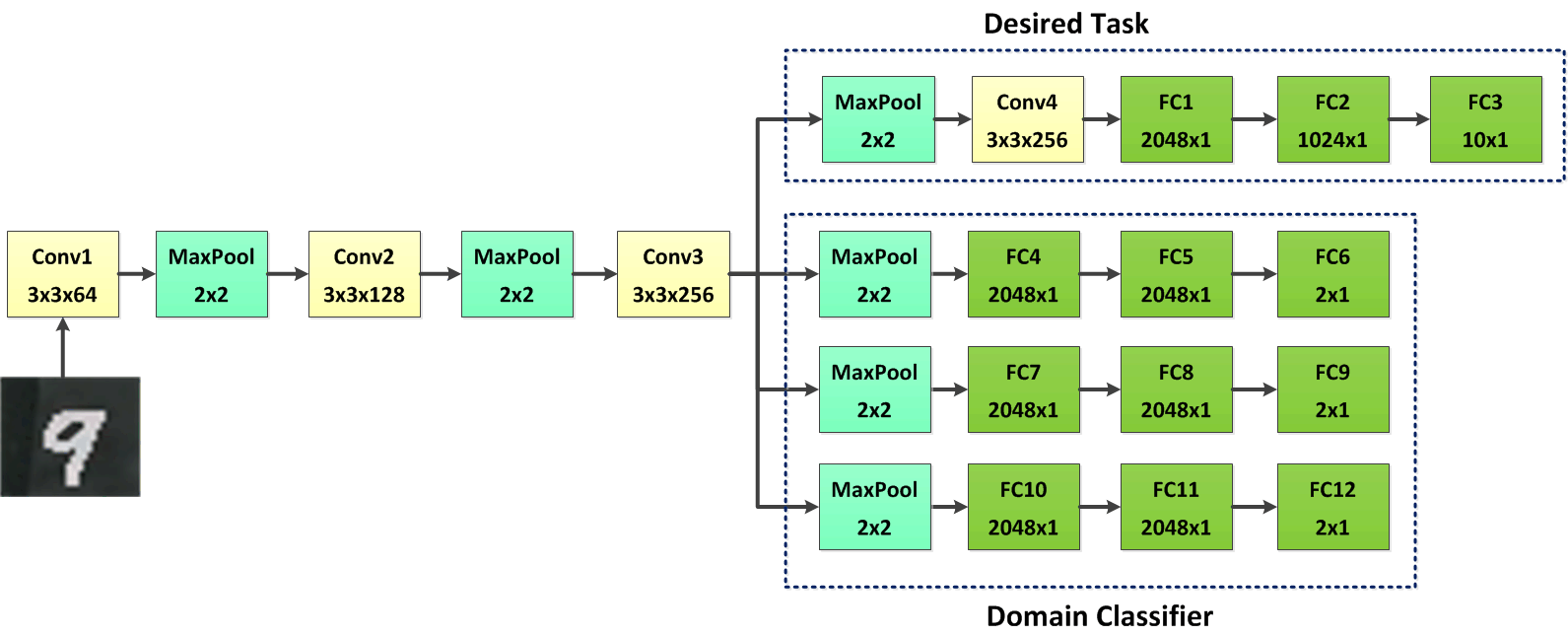}
\end{center}
\caption{MDANs network architecture for digit classification}
\label{fig:arch_mnist}
\end{figure}

\subsection{Details on WebCamT Vehicle Counting}

WebCamT is a public dataset for large-scale city camera videos, which have low resolution ($352\times240$), low frame rate (1 frame/second), and high occlusion. WebCamT has $60,000$ frames annotated with rich information: bounding box, vehicle type, vehicle orientation, vehicle count, vehicle re-identification, and weather condition. The dataset is divided into training and testing sets, with 42,200 and 17,800 frames, respectively, covering multiple cameras and different weather conditions. WebCamT is an appropriate dataset to evaluate domain adaptation methods, for it covers multiple city cameras and each camera is located in different intersection of the city with different perspectives and scenes. Thus, each camera data has different distribution from others. The dataset is quite challenging and in high demand of domain adaptation solutions, as it has $6,000,000$ unlabeled images from 200 cameras with only $60,000$ labeled images from $12$ cameras. The experiments on WebCamT provide an interesting application of our proposed MDANs: when dealing with spatially and temporally large-scale dataset with much variations, it is prohibitively expensive and time-consuming to label large amount of instances covering all the variations. As a result, only a limited portion of the dataset can be annotated, which can not cover all the data domains in the dataset.  MDAN provide an effective solution for this kind of application by adapting the deep model from multiple source domains to the unlabeled target domain. 

We evaluate the proposed methods on different numbers of source cameras. Each source camera provides $2000$ labeled images for training and the test set has $2000$ images from the target camera. In each mini-batch, we randomly sample the same number of unlabeled target images as the source images. We implement the Hard-Max and Soft-Max version of MDANs according to Alg.~\ref{alg:dann}, based on the basic vehicle counting network FCN described in~\citep{zhang2017understanding}. Please refer to~\citep{zhang2017understanding} for detailed network architecture and parameters. The learning rate is initialized by $0.01$ and adjusted by the first and second order momentum in the training process. The domain adaptation parameter is selected by cross validation. We compare our method with two baselines: \emph{Baseline 1: FCN}. It is our basic network without domain adaptation as introduced in work~\citep{zhang2017understanding}. \emph{Baseline 2: DANN}. We implement DANN on top of the same basic network following the algorithm introduced in work~\citep{ganin2016domain}. 

\end{document}